\journalname{Submitted to Ann Oper Res}
\begin{document}

\title{Multi-Sensor Slope Change Detection
}
\author{Yang Cao        \and
        Yao Xie \and Nagi Gebraeel
}


\institute{Yang Cao, Yao Xie, and Nagi Gebraeel \at
              H. Milton Stewart School of Industrial and Systems Engineering \\
              Georgia Institute of Technology, Atlanta, Georgia, USA.\\
              \email{\{caoyang, yao.xie, nagi\}@gatech.edu}           
}

\date{Received: date / Accepted: date}

\maketitle

\begin{abstract}
We develop a mixture procedure for multi-sensor systems to monitor data streams for a change-point that causes a gradual degradation to a subset of the streams. Observations are assumed to be initially normal random variables with known constant means and variances. After the change-point,  observations in the subset will have increasing or decreasing means. The subset and the rate-of-changes are unknown. Our procedure uses a mixture statistics, which assumes that each sensor is affected by the change-point with probability $p_0$. Analytic expressions are obtained for the average run length (ARL) and the expected detection delay (EDD) of the mixture procedure, which are demonstrated to be quite accurate numerically. We  establish the asymptotic optimality of the mixture procedure.  Numerical examples demonstrate the good performance of the proposed procedure. We also discuss an adaptive mixture procedure using empirical Bayes. This paper extends our earlier work on detecting an abrupt change-point that causes a mean-shift, by tackling the challenges posed by the non-stationarity of the slope-change problem.
\keywords{statistical quality control \and change-point detection \and intelligent systems}
\end{abstract}

\section{Introduction}
\label{intro}

As an enabling component for modern intelligent systems, multi-sensory  monitoring has been widely deployed for large scale systems, such as manufacturing systems \cite{fang2015adaptive}, \cite{6496166}, power systems \cite{VVV-IWSM-2015}, and biological and chemical threat detection systems \cite{threat_CSL_2015}. The sensors acquire a stream of observations, whose distribution  changes when the state of the network is shifted due to an abnormality or threat. We would like to detect the change online as soon as possible after it occurs, while controlling the false alarm rate. When the change happens, typically only a small subset of sensors are affected by the change, which is a form of sparsity. A mixture statistic which utilizes this sparsity structure of this problem is presented in \cite{xie2013sequential}. The asymptotic optimality of a related mixture statistic is established in \cite{optmal_mix_procedure}. Extensions and modifications of the mixture statistic that lead to optimal detection are considered in \cite{ChanOptimal2015}.

In the above references \cite{xie2013sequential,ChanOptimal2015}, the change-point is assumed to cause a shift in the means of the observations by the affected sensors, which is good for modeling an abrupt change. However, in many applications above, the change-point is an onset of system degradation, which causes a gradual change to the sensor observations. Often such a gradual change can be well approximated by a \emph{slope} change in the means of the observations. One such example is shown in Fig. \ref{Fig:NASA_data}, where multiple sensors monitor an aircraft engine  and each panel of figure shows the readings of one sensor.  At some time a degradation initiates and causes decreasing or increasing in the means of the observations. Another example comes from power networks, where there are thousands of sensors monitoring hundreds of transformers in the network. We would like to detect the onset of any degradation in real-time and predict the residual life time of a transformer before it breaks down and causes a major power failure.

In this paper, we present a mixture procedure that detects a change-point causing a slope change to the means of the observations, which can be a model for gradual degradations. Assume the observations at each sensor are $i.i.d.$ normal random variables with constant means. After the change, observations at the sensors affected by the change-point become normal distributed with increasing or decreasing means. The subset of sensors that are affected are unknown. Moreover, their rate-of-changes are also unknown. Our mixture procedure assumes that each sensor is affected with probability $p_0$ independently, which is a guess for the true fraction $p$ of sensors affected. When $p_0$ is small, this captures an empirical fact that typically only a small fraction of sensors are affected. With such a model, we derive the log-likelihood ratio statistic, which becomes applying a soft-thresholding to the local statistic at each sensor and then combining the results. The mixture procedure fires an alarm whenever the statistic exceeds a prescribed threshold.
We consider two versions of the mixture procedure that compute the local sensor statistic differently: the \emph{mixture CUSUM procedure} $T_1$, which assumes some nominal values for the unknown rate-of-change parameters, and the \emph{mixture generalized likelihood ration (GLR) procedure} $T_2$, which uses the maximum likelihood estimates for these parameters.
To characterize the performance of the mixture procedure, we present theoretical approximations for two commonly used performance metrics, the average run length (ARL) and the expected detection delay (EDD). Our approximations are shown to be highly accurate numerically and this is useful in choosing a threshold of the procedure. We also establish the asymptotic optimality of the mixture procedures.
Good performance of the mixture procedure is demonstrated via real-data examples, including: (1) detecting a change in the trends of  financial time series; (2) predicting the life of air-craft engines using the Turbofan engine degradation simulation dataset.

The mixture procedure here can be viewed as an extension of our earlier work on multi-sensor mixture procedure for detecting mean shifts \cite{xie2013sequential}. The extensions of theoretical approximations to EDD and especially to ARL are highly non-trivial, because of the non-i.i.d. distributions in the slope change problem. Moreover, we also establish some new optimality results which were omitted from \cite{xie2013sequential}, by extending the results in  \cite{lai1998information} and \cite{tartakovsky2014sequential} to handle non-$i.i.d.$ distributions in our setting. In particular, we generalize the theory to a scenario where the log likelihood ratio grows polynomially as a result of linear increase or decrease of the mean values, whereas in  \cite{xie2013sequential}, the log-likelihood ratio grows linearly. A related recent work \cite{optmal_mix_procedure} studies optimality of the multi-sensor mixture procedure for $i.i.d.$ observations, but the results therein do not apply to the slope change case here.

The rest of this paper is organized as follows. Section \ref{sec:formulation} sets up the formalism of the problem. Section \ref{sec:procedures} presents our mixture procedures for slope change detection, and Section \ref{sec:theoretical} presents theoretical approximations to its ARL and EDD, which are validated by numerical examples. Section \ref{sec:optimality} establishes the first order asymptotic optimality. Section \ref{sec:num_eg} shows real-data examples. Finally, Section \ref{sec:con} presents an extension of the mixture procedure that adaptively chooses $p_0$. All proofs are delegated to the appendix.

\section{Assumptions and formulation}\label{sec:formulation}

Given $N$ sensors. For the $n$th sensor $n = 1, 2, \ldots, N$, denote the sequence of observations   by $y_{n,i}$, $i=1,2,\ldots$. 
Under the hypothesis of no change, the  observations at the $n$th sensor have a \emph{known} mean $\mu_n$ and a \emph{known} variance $\sigma_n^2$. Probability and expectation in this case are denoted by $\mathbb{P}_\infty$ and $\mathbb{E}_\infty$, respectively. Alternatively, there exists an \emph{unknown} change-point that occurs at time $\kappa$, $0\leq \kappa < \infty$, and it affects an \emph{unknown} subset $\mathcal{A}\subseteq\{1, 2, \ldots, N\}$ of sensors  \emph{simultaneously}.  The fraction of affected sensors is given by $p = |\mathcal{A}|/N$.  For each affected sensor $n\in \mathcal{A}$, the mean of the observations $y_{n, t}$ changes linearly from the change-point time $\kappa+1$ and is given by $\mu_n + c_n (t-\kappa)$ for all $t > \kappa$, and the variance remains $\sigma_n^2$. For each unaffected sensor, the distribution stays the same. Here the \emph{unknown} rate-of-change $c_n$ can differ across sensors and it can be either positive or negative. The probability and expectation in this case are denoted by $\mathbb{P}_{\kappa}^{\mathcal{A}}$ and $\mathbb{E}_{\kappa}^{\mathcal{A}}$, respectively. In particular, $\kappa = 0$ denotes an immediate change occurring at the initial time.
The above setting can formulate as the following hypothesis testing problem:
\begin{equation}
\begin{split}
H_{0}: \quad& y_{n,i} \sim \mathcal{N}(\mu_n, \sigma_n^2), i=1,2,\ldots,n=1,2,\ldots,N, \\
H_{1}: \quad& y_{n,i} \sim \mathcal{N}(\mu_n, \sigma_n^2), i=1,2,\ldots,\kappa, \\
      \quad& \quad y_{n,i} \sim \mathcal{N}(\mu_n + c_n(i-\kappa), \sigma_n^2), i=\kappa+1,\kappa+2,\ldots,
       n\in \mathcal{A}, \\
      \quad&y_{n,i} \sim \mathcal{N}(\mu_n, \sigma_n^2), i=1,2,\ldots, n \in \mathcal{A}^c.
\label{P1}
\end{split}
\end{equation}
Our goal is to establish a stopping rule that stops as soon as possible after a change-point occurs and avoids raising false alarms when there is no change. We will make these statements more rigorous in Section \ref{sec:theoretical} and Section \ref{sec:optimality}.
Here, for simplicity, we assume that all sensors are affected by the change simultaneously. This ignores the fact that there can be delays across sensors. For asynchronous sensors, one possible approach is to adopt the scheme in \cite{hadjiliadis2009one}, which claims a change-point whenever the any sensor detects a change. We plan investigate the issue of delays in our future work.

A related problem is to detect a change in a linear regression model. One such example is a change-point in the trend of the stock price illustrated in Fig. \ref{fig:finance}(a). This can be casted into a slope change detection problem, if we fit a linear regression model under $H_0$ (e.g., using historical data) and subtract it from the sequence. The residuals after the subtraction will have zero means before the change-point, and their means  will increase or decrease linearly after the change-point.

\begin{figure}
\begin{center}
\includegraphics[width = 1.0\textwidth]{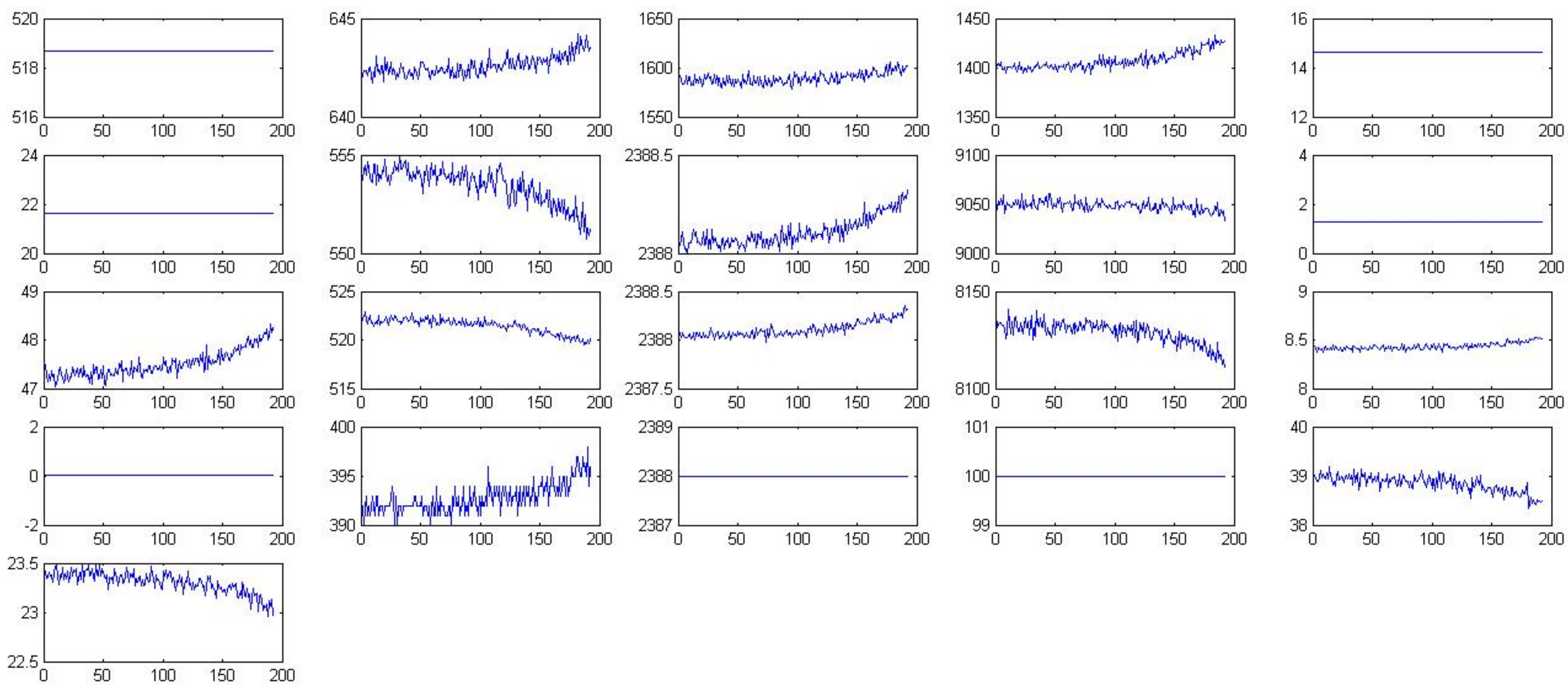}
\end{center}
\caption{Degradation sample paths recorded by 21 sensors, generated by C-MAPSS \cite{NASA_engine_simu}. A subset of sensors are affected by the change-point, which happens at an unknown time simultaneously and it causes a change in the slopes of the signals. The change can cause either an increase or decrease in the means.  
}
\label{Fig:NASA_data}
\end{figure}

\begin{remark}[Reason for not differencing the signal.] One legitimate question is that why not de-trending the signal at each sensor by difference, which may turn the slope change into a mean change problem and we can apply the standard CUSUM procedure designed for detecting the mean shift. Indeed, for the affected sensors after the change-point, $\mathbb{E}[y_{n, i+1} - y_{n, i}] = c_n$. However, differencing will also increase the variance, as $\mbox{Var}[y_{n, i+1} - y_{n, i}] = 2\sigma_n^2$. Hence, differencing reduces the signal-to-noise ratio and this is particularly bad for weak signals and makes them even non-detectable. This is validated by real data as well. Consider the engine data displayed in Fig. \ref{Fig:NASA_data}.  The first panel in Fig. \ref{fig:comparison_diff} corresponds to observations of one sensor that is affected by noise, which clearly has the ``signal'' as the mean is increasing. However, after we difference the signal, the change is almost invisible, as illustrated. We then try applying CUSUM on the difference, where the statistic either rises slowly which means it cannot detect quickly. We also try applying CUSUM (designed for mean change detection) directly on the original signal. Although the statistics rises reasonably fast; however, clearly this statistic cannot estimate the change-point location accurately due to its model mismatch. The last panel in Fig. \ref{fig:comparison_diff} shows our proposed statistic, which can detect the change fairly quickly, and it can also accurately estimate the change-point location.
%
\end{remark}

\begin{figure}[h]
\begin{center}
\includegraphics[width = 0.7\linewidth]{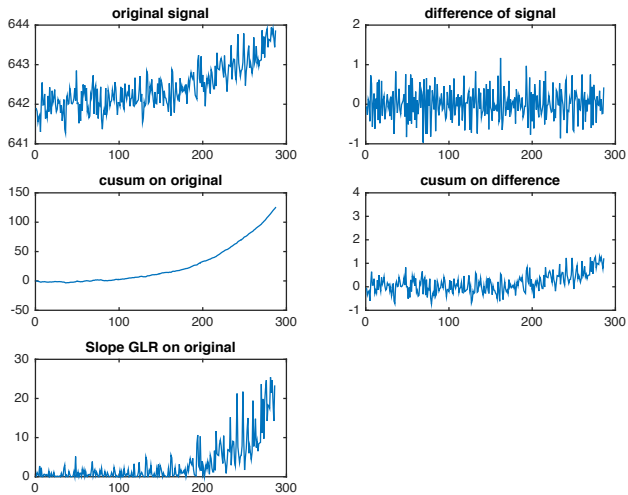}
\caption{Engine data displayed in Fig. \ref{Fig:NASA_data}: the first panel corresponds to observations of one sensor that is affected by noise, which clearly has the ``signal'' as the mean is increasing. However, after we difference the signal, the change is almost invisible. Then we compare applying CUSUM procedure (designed for mean shift) on the original signal, on the difference, and applying our proposed statistic on the original signal. }
\label{fig:comparison_diff}
\end{center}
\vspace{-.1in}
\end{figure}



\section{Detection procedures}\label{sec:procedures}
Since the observations are independent, for an assumed change-point location $\kappa = k$ and an affected sensor $n \in \mathcal{A}$, the log-likelihood for observations up to time $t > k$ is given by
\begin{equation}
\ell_n(k, t, c_n)
=\frac{1}{2\sigma_n^2}\sum_{i=k+1}^t \left[ 2c_n(y_{n,i}-\mu_n)(i-k) - c_n^2(i-k)^2 \right].
\label{likelihood}
\end{equation}
Motived by the mixture procedure in \cite{xie2013sequential} and \cite{siegmund2011detecting} to exploit an empirical fact that typically only a subset of sensors are affected by the change-point, we assume that each sensor is affected with probability $p_0 \in (0, 1]$ independently. In this setting, the log likelihood of all $N$ sensors is given by
\begin{equation}
\sum_{n=1}^N \log\left( 1-p_0+p_0\exp\left[\ell_n(k,t,c_n) \right] \right).
\label{global}
\end{equation}
Using (\ref{global}), we may derive several change-point detection rules.

Since the rate-of-change $c_n$ is unknown, One possibility is to set $c_n$ equal to some nominal post-change value $\delta_n$ and define the stopping rule, referred to as the \emph{mixture CUSUM} procedure:
\begin{equation}
T_1 = \inf\left\{t: \max_{0\leq k <t} \sum_{n=1}^N \log \left(1-p_0+p_0 \exp[\ell_n(k,t,\delta_n)] \right) \geq b \right\},
\label{Slope_mixtureCUSUM}
\end{equation}
where $b$ is a threshold typically prescribed to satisfy the average run length (ARL) requirement (formal definition of ARL is given in Section \ref{sec:theoretical}).

Another possibility is to replace $c_n$ by its maximum likelihood estimator. Given  the current number of observations $t$ and a putative change-point location $k$, by setting the derivative of the log likelihood function (\ref{likelihood}) to 0,
we may solve for the maximum likelihood estimator: 
\begin{equation}
\hat{c}_n(k, t) = \frac{\sum_{i=k+1}^t (i-k) (y_{n,i}-\mu_n)}{\sum_{i=k+1}^t (i-k)^2}.
\label{estimation_of_slope}
\end{equation}
Define $\tau = t-k$ to be the number of samples after the change-point $k$.  Denote the sum of squares from $1$ to $\tau$, and the weighted sum of data as, respectively,
\[A_{\tau} = \sum_{i=1}^{\tau} i^2, \qquad
W_{n, k, t} = \sum_{i = k+1}^t (i-k) (y_{n, i}-\mu_n)/\sigma_n.\]
Let
\begin{equation}
\begin{split}
U_{n,k,t} =& \left(A_\tau\right)^{-1/2}  W_{n, k, t}.
\label{U_def}
\end{split}
\end{equation}
Substitution of  (\ref{estimation_of_slope}) into (\ref{likelihood}) gives the log generalized likelihood ratio (GLR) statistic at each sensor:
\begin{equation}
\ell_n(k,t,\hat{c}_n)
= U_{n,k,t}^2/2, \label{local_GLR}
\end{equation}
and we define the \emph{mixture GLR} procedure as
\begin{equation}
T_2 = \inf\left\{t: \max_{0\leq k <t} \sum_{n=1}^N \log \left(1-p_0+p_0 \exp\left[U_{n,k,t}^2/2\right] \right) \geq b \right\},\label{mainstatistic}
\end{equation}
where $b$ is a prescribed threshold.

\begin{remark}[Window limited procedures.]
In the following we use \emph{window limited} versions of $T_1$ and $T_2$, where the maximum for the statistic is restricted to a window $t-w\leq k \leq t-w'$ for suitable choices of window size $w$ and $w'$. In the following, we use $\widetilde{T}$ to denote a window-limited version of a procedure $T$.
%
%
By searching only over a window of the past $w-w'+1$ samples, this reduces the memory requirements to implement the stopping rule, and it also sets a minimum level of change that we want to detect.
%
The choice of $w$ may depend on $b$ and sometimes we need make additional assumptions on $w$ for the purpose of establishing the asymptotic results below. More discussions about the choice of $w$ can be found in \cite{lai1995sequential} and \cite{lai1998information}.
 The other parameter $w'$ is the minimum number of observations needed for computing the maximum likelihood estimator for parameters. In the following, we  set $w'=1$. 

\end{remark}

\begin{figure}
\begin{center}
\includegraphics[width = .5\textwidth]{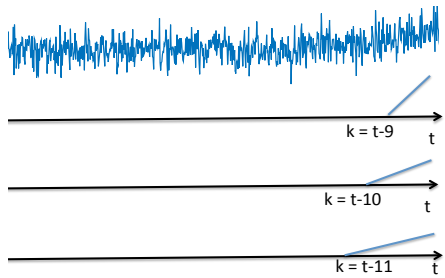}
\end{center}
\caption{Matched filter interpretation of the generalized likelihood ratio statistic at each sensor $U_{n, k, t} = A_{\tau}^{-1} \sum_{i = k+1}^t (i-k) (y_{n, i} - \mu_n)/\sigma_n$: data at each sensor is matched with a triangle-shaped signal that starts at a hypothesized change-point time $k$ and ends at the current time $t$. The slope of the triangle is $A_{\tau}^{-1}$, so that the $\ell_2$ norm of the triangle signal is one. }
\vspace{-0.1in}
\label{fig:MF}
\end{figure}

\begin{remark}[Relation to mean shift.]\label{remark_mean_shift}
For the mean-shift multi-sensor change-point detection \cite{xie2013sequential}, the detection statistic depends on a key quantify, which is the average of the samples in the time window $[k+1, t]$. Note that in the slope change case, the detection statistic has a similar structure, except that the key quantity is replaced by a weighted average of the samples in the window: $(t-k)^{-1/2}\sum_{i=k+1}^t (y_{n, i}-\mu_n)/\sigma_n$. This has an interpretation of ``matched filtering'', as illustrated in Fig. \ref{fig:MF}: each data stream is matched with a triangle shaped signal starting at a potential change-point time $k$ that represents a possible slope change.
\end{remark}

\begin{remark}[Recursive computation.]
The quantity $W_{n,k,t}$ involved in the detection statistic for (\ref{mainstatistic}) can be calculated recursively,
\[W_{n,k,t+1} = W_{n,k,t} + (t+1-k)\left((y_{n,t+1}-\mu_n)/\sigma_n\right),\] where $W_{n,t,t}\triangleq0$. This facilitates online implementation of the detection procedure. The quantity $A_\tau$ can be pre-computed since it is data-independent.
\end{remark}

%

\begin{remark}[Extension to correlated sensors.] \label{sec:cor_sensor}
The mixture procedure (\ref{mainstatistic}) can be easily extended to the case where sensors are correlated with a known covariance matrix. Define a vector of observations ${\bf y}_{i} = [y_{1, i}, \ldots, y_{N, i}]^\intercal$ for all sensors at time $i$. When there is no change,  ${\bf y}_i$ follows a normal distribution with a  mean vector ${\bf \mu} = [\mu_1, \ldots, \mu_N]^\intercal$ and a covariance matrix $\Sigma_0$. Alternatively, there may exist a change-point at time $\kappa$ such that after the change, the observation vectors are normally distributed with mean vector ${\bf \mu} + (i-\kappa){\bf c}$, ${\bf c} = [c_1, \ldots, c_n]^\intercal$ and the covariance matrix remains $\Sigma_0$ for all $i>\kappa$.
We can whiten the signal vector by ${\bf \widetilde{y}}_i \triangleq \Sigma_0^{-1/2}({\bf y}_i-\mu)$, where $\Sigma_0^{-1/2}$ is the square-root of the positive definite covariance matrix that may be computed via its eigen-decomposition. The coordinates of ${\bf \widetilde{y}}_i$ are independent and the problem then becomes the original hypothesis testing problem (\ref{P1}) with all sensors being affected simultaneously by the change-point, the rate-of-change vector is $\Sigma_0^{-1/2}{\bf c}$, the mean vector is zero before the change, and the covariance remains an identity matrix before and after the change. Hence, after the transform, we may apply the mixture procedure with $p_0 = 1$ on ${\bf \widetilde{y}}_i$.
\end{remark}

\section{Theoretical properties of the detection procedures}\label{sec:theoretical}

In this section we develop theoretical properties of the mixture procedure. We use two standard performance metrics (1) the expected
value of the stopping time when there is no change, the average run length (ARL); (2) the expected detection delay (EDD), defined to be the expected stopping time
in the extreme case where a change occurs immediately at $\kappa = 0$. Since the observations are {\it i.i.d.} under the null,
the EDD provides an upper bound on the expected delay after a change-point until detection occurs when the change occurs later in the sequence of observations (this is also a commonly used fact in change-point detection work \cite{xie2013sequential}). An efficient detection procedure should have a large ARL and meanwhile a small EDD.  Our approximation to the ARL is shown below to be  accurate. In practice, we usually fix ARL to be a large constant, and set the threshold $b$ in (\ref{mainstatistic}) accordingly. The accurate approximation here can be used to find the threshold analytically. Approximation for EDD shows its dependence on a quantity that plays a role of the Kullback-Leibler (KL) divergence, which links to the optimality results in Section \ref{sec:optimality}.

\vspace{.05in}

\noindent {\bf Average run length (ARL).} We present an accurate approximation for ARL of a window limited version of the stopping rule in (\ref{mainstatistic}), which we denote as $\widetilde{T}_2$. Let
\begin{equation}
g(x) \triangleq \log(1-p_0+p_0\exp(x^2/2)),
\label{func}
\end{equation}
and
\[
\psi(\theta)  = \log \mathbb{E}\{\exp[\theta g (Z)  ]\},
\]
where $Z$ has a standard normal distribution. Also let
\[
\gamma(\theta) = \frac{1}{2} \theta^2  \mathbb{E}\left\{  [ \dot{g}_{}(Z) ]^2 \exp \left[ \theta g(Z)  - \psi_{} (\theta) \right] \right\},
\]
and
\[
H(N, \theta)  = \frac{\theta [2\pi  \ddot{\psi}(\theta)]^{1/2}}{\gamma^2(\theta) N^{1/2}}\exp\{N[\theta\dot{\psi}(\theta) - \psi(\theta)]\},
\]
where the dot $\dot{f}$ and double-dot $\ddot{f}$ denote the first-order and second-order derivatives of a function $f$, respectively. Denote by $\phi(x)$ and $\Phi(x)$ the standard normal density function and its distribution function, respectively. Also define a special function $\nu(x) = 2x^{-2}\exp[-2\sum_{n=1}^\infty n^{-1} \Phi(-|x|n^{1/2}/2)]$. For numerical purposes an accurate approximation is given by \cite{Segimundbook2007} 
\[
\nu(x) \approx \frac{(2/x)[\Phi(x/2) - 1/2]}{(x/2)\Phi(x/2) + \phi(x/2)}.
\]

\begin{theorem}[ARL of $\widetilde{T}_2$.]
Assume that $N\rightarrow \infty$ and $b\rightarrow \infty$ with
$b/N$ fixed. Let $\theta$ be defined by $\dot{\psi}(\theta) = b/N$. For a window limited stopping rule of (\ref{mainstatistic}) with $w = o(b^r)$ for some positive integer $r$, we have
\begin{equation}
\begin{split}
\mathbb{E}_\infty\{\widetilde{T}_2\}
= H(N, \theta)\cdot\left[ \int_{\sqrt{2N/(4w/3)^{1/2}}}^{\sqrt{2N/(4/3)^{1/2}}} y \nu^2(y\sqrt{\gamma(\theta)}) dy \right]^{-1} + o(1).
\end{split}
\label{eq:ARL_integration}
\end{equation}
\label{maintheorem}
\end{theorem}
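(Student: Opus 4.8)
The plan is to adapt the change-of-measure and localization machinery used for the multi-sensor mean-shift statistic in \cite{xie2013sequential} to the non-stationary field produced by the slope-change score $U_{n,k,t}$. The starting observation is distributional: under $\mathbb{P}_\infty$ the normalized observations $(y_{n,i}-\mu_n)/\sigma_n$ are i.i.d.\ standard normal, so $W_{n,k,t}=\sum_{i=k+1}^t(i-k)(y_{n,i}-\mu_n)/\sigma_n$ is a weighted Gaussian sum with variance $\sum_{i=1}^\tau i^2=A_\tau$, whence $U_{n,k,t}=A_\tau^{-1/2}W_{n,k,t}$ is \emph{exactly} $\mathcal{N}(0,1)$ for every $(k,t)$. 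Consequently, at each fixed $(k,t)$ the global statistic $M_{k,t}\triangleq\sum_{n=1}^N g(U_{n,k,t})$ is a sum of $N$ i.i.d.\ copies of $g(Z)$ with cumulant generating function $\psi(\theta)$, and $\widetilde{T}_2$ is the first time the two-parameter lattice field $\{M_{k,t}: w'\le t-k\le w\}$ crosses $b$. Because $b\to\infty$, such crossings are rare and approximately Poisson in $t$, so I would first reduce the ARL to $\mathbb{E}_\infty\{\widetilde{T}_2\}\approx 1/\lambda$, where $\lambda$ is the asymptotic frequency of clumps of the field above $b$ per unit time. The window constraint $w=o(b^r)$ enters here to keep the window negligible relative to the run length and to control boundary effects, so that the Poisson and local reductions are legitimate.

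To compute $\lambda$ I would exponentially tilt each summand $g(U_{n,k,t})$ by $\theta$, calibrated so that the tilted mean of $M_{k,t}$ sits on the boundary, i.e.\ $N\dot\psi(\theta)=b$, which is exactly $\dot\psi(\theta)=b/N$. The Radon--Nikodym factor of this tilt produces the large-deviation term $\exp\{N[\theta\dot\psi(\theta)-\psi(\theta)]\}$, whose reciprocal is to leading order the single-point exceedance probability and thus scales the ARL upward; a local central limit / saddlepoint expansion for the density of the i.i.d.\ sum at $b$ contributes the factor $[2\pi\ddot\psi(\theta)]^{1/2}N^{-1/2}$; and the localization constants below supply the prefactor $\theta/\gamma^2(\theta)$. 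Together these assemble the numerator $H(N,\theta)$.

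The core of the argument, and where the slope model departs from the mean-shift case, is the local covariance geometry of the field. Using $A_\tau\sim\tau^3/3$, I would show that advancing the current time by one step (with $k$ fixed) perturbs $U_{n,k,t}$ with increment variance $\sim 3/\tau$, whereas shifting the hypothesized change-point $k$ by one step (with $t$ fixed) perturbs it only at the much smaller scale $\sim 3/(4\tau^2)$; the field is therefore strongly anisotropic in the $(k,t)$ plane, unlike the roughly isotropic mean-shift field. Linearizing $g$ shows that, under the tilt, the per-step fluctuation of $M$ is governed by $\gamma(\theta)=\tfrac12\theta^2\mathbb{E}\{[\dot{g}(Z)]^2\exp[\theta g(Z)-\psi(\theta)]\}$, the tilted second moment of the local drift $\dot{g}$. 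Applying the localization argument separately in the time and change-point directions collapses the discrete overshoot in each into a factor $\nu(\cdot)$, giving the product $\nu^2$, and the anisotropic scales are exactly what force the common rescaled argument $y\sqrt{\gamma(\theta)}$ in both factors. Finally, summing the per-point contribution over admissible window lengths and passing to an integral, the change of variables $y^2=2N/(4\tau/3)^{1/2}$ maps $\tau\in[1,w]$ onto the stated limits and supplies the Jacobian $y\,dy$, producing the denominator $\int y\,\nu^2(y\sqrt{\gamma(\theta)})\,dy$. Combining numerator and denominator and gathering lower-order terms into $o(1)$ gives the claim.

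I expect the main obstacle to be precisely the non-i.i.d.\ (non-stationary) feature flagged in the introduction: because the weights $(i-k)$ make $W_{n,k,t}$ a non-stationary Gaussian process, the per-sensor statistic is not a random walk with stationary increments, so the classical renewal-theoretic overshoot results cannot be invoked directly, and the two lattice directions decorrelate at genuinely different rates ($1/\tau$ versus $1/\tau^2$). The delicate steps are to verify that this anisotropic, $\tau$-dependent local structure still yields a clean clump decomposition reducing to $\nu^2(y\sqrt{\gamma(\theta)})$ with the correct geometric constant $4/3$, and to show, uniformly over $1\le\tau\le w$, that the Gaussian approximation to the local field and the replacement of the window-length sum by its integral incur only $o(1)$ error.
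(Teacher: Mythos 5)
Your overall architecture --- exponential tilting calibrated by $\dot{\psi}(\theta)=b/N$, the change-of-measure and localization machinery of Yakir and Siegmund--Yakir--Zhang, assembly of $H(N,\theta)$ from the large-deviation factor, a local CLT factor $[2\pi\ddot{\psi}(\theta)]^{1/2}N^{-1/2}$, and localization constants $\theta/\gamma^2(\theta)$, followed by a sum-to-integral over window lengths --- is exactly the route of the paper's (admittedly informal) Appendix A derivation. The genuine gap sits at the one place where the slope problem actually differs from the mean-shift case: the local geometry of the field over the $(k,t)$ lattice. Your raw unit-shift rates are correct: $\mathrm{Var}(U_{n,k,t+1}-U_{n,k,t})\approx 3/\tau$ and $\mathrm{Var}(U_{n,k+1,t}-U_{n,k,t})\approx 3/(4\tau^2)$. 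But the conclusions you draw from them do not follow. First, if the two lattice directions really behaved as independent random walks with these \emph{different} per-step variances, the clump decomposition would produce $\nu(\beta_1)\nu(\beta_2)$ with two different arguments; anisotropy cannot ``force'' the common argument $y\sqrt{\gamma(\theta)}$ in both factors --- that claim is a non sequitur, and your final change of variables $y^2 = 2N/(4\tau/3)^{1/2}$ is reverse-engineered from the theorem statement rather than implied by your local analysis. Second, and more damaging, in the change-point direction the correlation decays \emph{quadratically}: a direct computation gives $\mathrm{Corr}^2(U_{n,k,t},U_{n,k+u,t}) = (4\tau^2-4)/(4\tau^2-1)$ at $u=1$ and $\mathrm{Corr}(U_{n,k,t},U_{n,k+u,t})\approx 1 - 3u^2/(8\tau^2)$ in general, so the unit-step decrements accumulate ballistically, not additively. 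The field is therefore locally \emph{smooth} in $k$, and the renewal/overshoot argument that produces a $\nu$ factor at unit spacing is not valid in that direction as you describe it.

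The paper handles precisely this point by a different decomposition: it Taylor-expands $\theta[g(U_{n,k',t'})-g(U_{n,k,t})]$, splits $U_{n,k',t'}-U_{n,k,t}$ into a renormalization part (two weighted sums running backward from $t'$, argued to cancel asymptotically) plus $A_\tau^{-1/2}(W_{n,k',t'}-W_{n,k,t})$, computes the covariance along equal-window shifts $t-k=t'-k'=\tau$ at the coarser spacing $u=\sqrt{\tau}$, and concludes that the effective local field \emph{decouples} into the $k'$ and $t'$ coordinates with the \emph{same} increment variance $\theta^2N/\eta$ in both directions. It is this effective isotropy after rescaling --- the opposite of your anisotropic picture --- that reduces the problem to the Siegmund--Yakir--Zhang setting and yields $\nu^2(y\sqrt{\gamma(\theta)})$ with a single substitution variable whose range gives the limits $\sqrt{2N/(4w/3)^{1/2}}$ and $\sqrt{2N/(4/3)^{1/2}}$. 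Without establishing this decoupling and the common increment variance (or an equivalent exact computation of the local constants, which is delicate: the constants in this step are exactly where all the $4/3$ factors originate), your argument cannot arrive at the stated integration limits, so the proposal as written fails at its central step even though its scaffolding matches the paper's.
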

The proof of Theorem \ref{maintheorem} is an extension of the proofs in \cite{xie2013sequential} and \cite{yakir2013extremes} using the change of measure techniques.
To illustrate the accuracy of approximation given in Theorem \ref{maintheorem}, we perform $500$ Monte Carlo trials with $p_0=0.3$, and $w=200$. Figs. \ref{fig:arl}(a) and (b) compare the simulated and theoretical approximation of ARL given in Theorem \ref{maintheorem} when $N = 100$ and $N = 200$, respectively. Note that expression (\ref{eq:ARL_integration}) takes a similar form as the ARL approximation obtained in \cite{xie2013sequential} for the multi-sensor mean-shift case, and only differs in the upper and lower limits in the integration. In Figs. \ref{fig:arl}(a) and (b) we also plot the approximate ARL for the mean shift case in \cite{xie2013sequential}, which shows the importance of having the corrected integration upper and lower limits in our approximation.
In practice, ARL is usually set to $5000$ and $10000$. Table \ref{table:theo} compares the thresholds obtained theoretically and from simulation at these two ARL levels, which demonstrates the accuracy of our approximation. %

\begin{figure}[h]
\begin{center}
\begin{tabular}{cc}
\includegraphics[width = 0.45\linewidth]{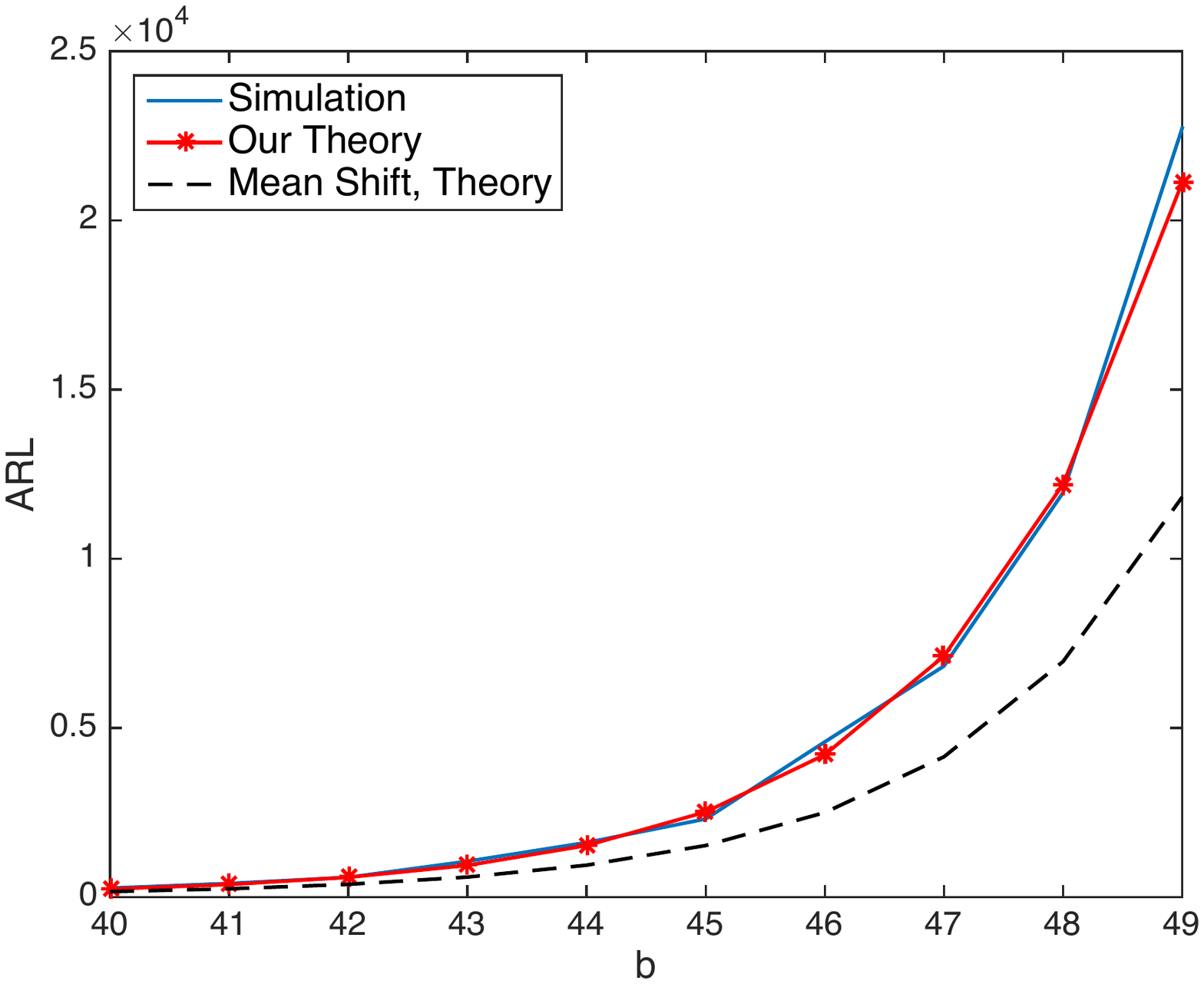} &
\includegraphics[width = 0.45\linewidth]{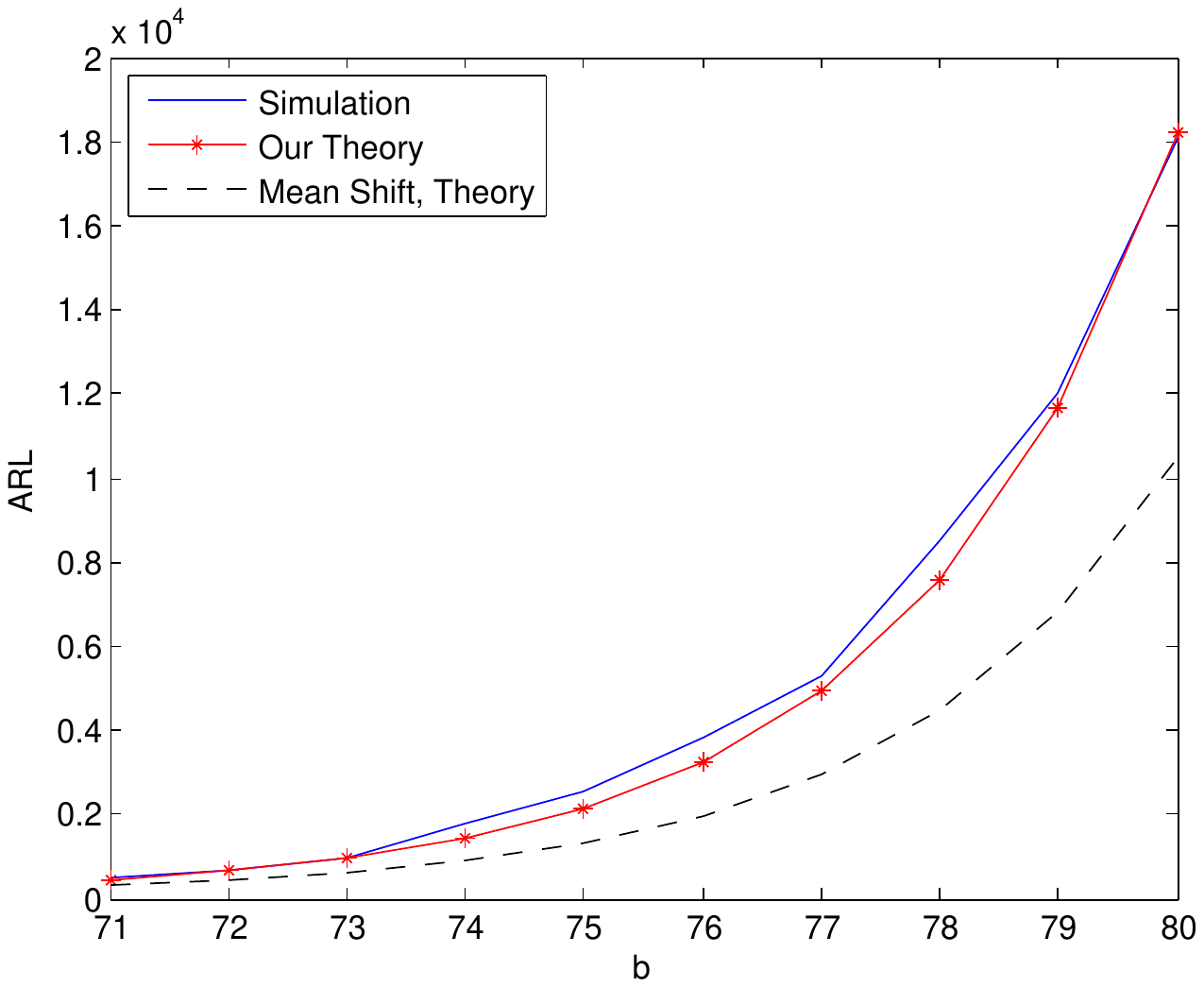}\\
(a) & (b)
\end{tabular}
\caption{(a) Comparison of theoretical and simulated ARL when (a): $N=100$, $p_0=0.3$, and $w=200$; (b): $N=200$, $p_0=0.3$, and $w=200$. }
\label{fig:arl}

\end{center}
\end{figure}

\begin{table}[h!]
\begin{center}
\vspace{-.1in}
\caption{Theoretical versus simulated thresholds for $p_0 = 0.3$, $N=100$ or 200, and $w=200$. 
}
\label{table:theo}
\begin{tabular}{|c|c|c|c|c|}
  \hline
   & ARL & Theory $b$  & Simulated ARL & Simulated $b$\\ \hline
  $N = 100$ & 5000 & 46.34 & 5024 &  46.31 \\ \hline
  &10000 & 47.64 & 10037 &  47.60 \\ \hline
  $N = 200$ & 5000 & 77.04 & 5035 &  76.89 \\ \hline
  & 10000 & 78.66 & 10058 &  78.59 \\ \hline
\end{tabular}
\end{center}
\vspace{-.1in}
\end{table}


\vspace{.1in}
\noindent{\bf Expected detection delay (EDD).} After a change-point occurs, we are interested in the expected number of additional observations required for detection. In this section we establish an approximation upper bound to the expected detection delay. 
Define a quantity
\begin{equation}
\Delta = \left(\sum_{n\in \mathcal{A}} c_n^2/\sigma_n^2 \right)^{1/2},
\label{Delta_def}
\end{equation}
which roughly captures the total signal-to-noise ratio of all affected sensors.
%

\begin{theorem}[EDD of $\widetilde{T}_2$.]\label{thm:EDD}
Suppose $b\rightarrow \infty$, with other parameters held fixed. Let $U$ be a standard normal random variable. If the window length $w$ is sufficiently large and greater than $(6b/\Delta^2)^{1/3}$, then
\begin{equation}
\mathbb{E}_0^\mathcal{A}\{\widetilde{T}_2\} \leq \left \{\frac{b-|\mathcal{A}|\log p_0 - (N-|\mathcal{A}|)\mathbb{E}\{g(U)\}}{\Delta^2/6}\right\}^{1/3} + o(1),
\label{EDD_expr}
\end{equation}
\end{theorem}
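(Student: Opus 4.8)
The plan is to bound $\widetilde{T}_2$ from above by the much simpler stopping time obtained by freezing the putative change-point at the true value $k=0$, and then to analyze the resulting one–dimensional crossing problem. Since enlarging the set of admissible $k$ can only increase the running maximum, for every admissible $t$ the window-limited statistic dominates its $k=0$ term, so
\begin{equation}
\widetilde{T}_2 \le \rho := \inf\left\{t: S_t \ge b\right\}, \qquad S_t := \sum_{n=1}^N g(U_{n,0,t}),
\label{eq:reduction}
\end{equation}
on the event $\{\rho \le w\}$, i.e. as long as $k=0$ is still inside the search window at the detection time. Since $\rho$ will be seen to concentrate near $(6b/\Delta^2)^{1/3}$, the hypothesis $w>(6b/\Delta^2)^{1/3}$ is exactly what makes \eqref{eq:reduction} hold with probability tending to one, so I would first record this reduction.

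Next I would compute the law of $U_{n,0,t}$ under $\mathbb{P}_0^\mathcal{A}$. Writing $y_{n,i}=\mu_n+c_n i + \sigma_n \varepsilon_{n,i}$ with $\varepsilon_{n,i}$ standard normal, the definitions of $W_{n,k,t}$, $A_\tau$ and \eqref{U_def} give, for an affected sensor $n\in\mathcal{A}$, $U_{n,0,t} = (c_n/\sigma_n) A_t^{1/2} + Z_{n}$ with $Z_n$ standard normal, whereas for $n\in\mathcal{A}^c$ simply $U_{n,0,t}=Z_n$. The elementary inequalities $g(x)\ge \log p_0 + x^2/2$ and $g(x)\ge 0$, both immediate from \eqref{func}, then yield the deterministic-plus-noise lower bound
\begin{equation}
S_t \ge |\mathcal{A}|\log p_0 + \tfrac12\sum_{n\in\mathcal{A}} U_{n,0,t}^2 + \sum_{n\in\mathcal{A}^c} g(U_{n,0,t}).
\label{eq:lb}
\end{equation}
Expanding the square gives $\tfrac12\sum_{n\in\mathcal{A}} U_{n,0,t}^2 = \tfrac{A_t}{2}\Delta^2 + A_t^{1/2}\sum_{n\in\mathcal{A}}(c_n/\sigma_n)Z_n + \tfrac12\sum_{n\in\mathcal{A}}Z_n^2$, so $S_t$ splits into a deterministic dominant term $\tfrac{A_t}{2}\Delta^2 \sim \tfrac{t^3}{6}\Delta^2$ (using $A_t\sim t^3/3$), a mean-zero cross term of order $t^{3/2}$, and $O(1)$ remainders, with $\sum_{n\in\mathcal{A}^c} g(U_{n,0,t})$ having mean $(N-|\mathcal{A}|)\mathbb{E}\{g(U)\}$.

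The core of the argument is a crossing-time calculation driven by the cubic deterministic growth. Setting $\tfrac{A_t}{2}\Delta^2 = b - |\mathcal{A}|\log p_0 - (N-|\mathcal{A}|)\mathbb{E}\{g(U)\}$ and solving identifies the leading crossing time $t^\dagger$ as the right-hand side of \eqref{EDD_expr}. To turn this into a bound on $\mathbb{E}_0^\mathcal{A}\{\rho\}$ I would fix $t^\dagger$ at that value and show that $\mathbb{P}_0^\mathcal{A}(S_{t^\dagger}<b)$ is negligible: the only obstructions are the lower-order fluctuation terms, whose standard deviations are $o(t^{\dagger 3})$, so by Chebyshev they move the crossing by at most $o(1)$, and the bounded additive constants wash out because perturbing the cube-root argument by $O(1)$ changes it only by $O(b^{-2/3})=o(1)$. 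A tail estimate showing $\mathbb{P}_0^\mathcal{A}(\rho > t^\dagger + s)$ decays fast enough in $s$ (so that $\rho/t^\dagger$ is uniformly integrable) then upgrades this high-probability crossing to $\mathbb{E}_0^\mathcal{A}\{\rho\}\le t^\dagger + o(1)$, and \eqref{eq:reduction} completes the proof.

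The main obstacle is precisely the non-stationarity that separates this problem from the mean-shift case: the per-sensor log-likelihood grows like $U_{n,0,t}^2\sim t^3$, i.e. polynomially rather than linearly, so $S_t$ is not a random walk and the renewal/Wald machinery used for EDD in \cite{xie2013sequential} does not transfer directly. Controlling the $t^{3/2}$ cross term against the cubic signal, verifying the uniform integrability needed to pass from a high-probability crossing to the expectation, and checking that the window condition $w>(6b/\Delta^2)^{1/3}$ keeps $k=0$ admissible throughout the relevant horizon, are the steps I expect to require the most care.
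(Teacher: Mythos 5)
Your proposal is correct in substance and reproduces the same leading-order calculation as the paper, but the step that converts the crossing equation into a bound on the expectation is genuinely different. The paper's appendix argument is explicitly informal: it computes $\mathbb{E}_0^{\mathcal{A}}\{U_{n,0,t}^2\} = 1+(c_n/\sigma_n)^2\, t(t+1)(2t+1)/6$, invokes the large-$x$ approximation $g(x)\approx x^2/2+\log p_0$, equates the expected value of the statistic at the stopping time to $b$ via Wald's identity (ignoring overshoot), and then extracts the cube root from $\mathbb{E}_0^{\mathcal{A}}\{T^3\}$ using Jensen's inequality $\mathbb{E}\{T^3\}\geq(\mathbb{E}\{T\})^3$, arriving at \eqref{firstorderEDD}. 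You instead freeze the putative change-point at $k=0$ (the same reduction is implicit in the paper, which also works only with $U_{n,0,t}$), replace the approximation of $g$ by the rigorous one-sided bounds $g(x)\geq \log p_0+x^2/2$ and $g\geq 0$, and run a pathwise crossing argument: cubic deterministic drift $A_t\Delta^2/2$, a mean-zero cross term of order $t^{3/2}$, Chebyshev concentration of the crossing time at $t^\dagger$, and a tail/uniform-integrability estimate to pass to expectations. What each buys: the paper's route is short, but its appeal to Wald's identity is heuristic here because $S_t$ is not a random walk --- the increments are neither i.i.d.\ nor stationary, which is exactly the difficulty you identify; your route avoids Wald entirely and is closer to a complete proof, at the cost of the tail estimates and of handling the event $\{\rho>w\}$, on which your reduction $\widetilde{T}_2\le\rho$ fails and a restart/blocking argument is needed to show that the contribution to $\mathbb{E}_0^{\mathcal{A}}\{\widetilde{T}_2\}$ is $o(1)$ (the paper sidesteps this by simply assuming $w$ sufficiently large). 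Two small corrections to your sketch: the Chebyshev criterion should require the fluctuation scale to be $o(t^{\dagger 2})$, since the local growth rate of the drift is $\Delta^2 t^2/2$, so your stated sufficient condition $o(t^{\dagger 3})$ is not enough --- though the actual orders $O(t^{\dagger 3/2})$ and $O(1)$ that you compute do satisfy the correct criterion; and note that $t^\dagger$ can slightly exceed $(6b/\Delta^2)^{1/3}$ because $-|\mathcal{A}|\log p_0>0$, so keeping $k=0$ admissible uses the ``sufficiently large'' slack in the window hypothesis rather than the bare inequality $w>(6b/\Delta^2)^{1/3}$.
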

where $\mathbb{E}_0^\mathcal{A}$ is defined at the beginning of Section \ref{sec:formulation}.
To demonstrate the accuracy of (\ref{EDD_expr}), we perform 500 Monte Carlo trials. In each trial, we let the change-point happen at the initial time and randomly select $Np$ sensors affected by the change and set the rate-of-change $c_n = c$ for a constant $c$, $n \in \mathcal{A}$. The thresholds for each procedure are set so that their ARLs are equal to 5000. Fig. \ref{fig:simulationDD} shows EDD versus $c$, where our upper bound turns out to be an accurate approximation to EDD.

\begin{figure}[h]
\begin{center}
\includegraphics[width = 0.45\linewidth]{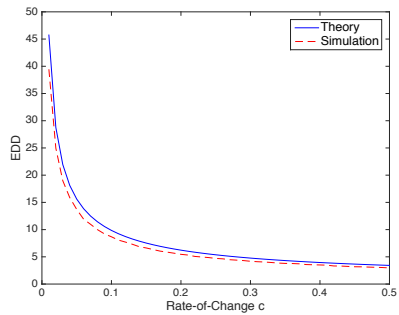}
\caption{Comparison of theoretical and simulated EDD when $N=100$, $p_0=0.3$, $p=0.3$, and  $w=200$. All rate-of-change $c_n = c$ for affected sensors. }
\label{fig:simulationDD}
\end{center}
\vspace{-0.1in}
\end{figure}

\section{Optimality}\label{sec:optimality}

In this section, we prove that our detection procedures: $T_1$ and the window limited versions $\widetilde{T}_1$ and $\widetilde{T}_2$ are asymptotically first order optimal. The optimality proofs here extends the results in \cite{tartakovsky2014sequential}, \cite{lai1998information}, for our multi-sensor {\it non-i.i.d.} data setting. The {\it non-i.i.d.}ness is due to the fact that under the alternative, the means of the samples change linearly as the number of post-change samples grows. Following the classic setup, we consider a class of detection procedures with their ARL greater than some constant $\gamma$, and then find an optimal procedure within such a class to minimize the detection delay. Since it is difficult to establish an uniformly optimal procedure for any given $\gamma$, we consider the asymptotic optimality when $\gamma$ tends to infinity.

We first study a general setup with non-$i.i.d.$ distributions for the multi-sensor problem, and establish optimality of two general procedures related to $T_1$ and $T_2$. Then we specialize the results to the multi-sensor slope-change detection problem. In particular, we generalize the lower bound for the detection delay from the single sensor case (Theorem 8.2.2 in \cite{tartakovsky2014sequential} and Theorem 1 in \cite{lai1998information}) to our multi-sensor case. We also
generalize the result therein to our setting where the log-likelihood ratio grows polynomially on the order of $j^q$ for $q\geq 1$ as the number of post-change observations $j$ grows (in the classic setting $q = 1$); this is used to account for the non-stationarity in our problem. 

\vspace{.1in}
\noindent{\bf Setup for general non-$i.i.d.$ case.} Consider a setup for the multi-sensor problem with non-$i.i.d.$ data. Assume there are $N$ sensors that are independent (or with known covariance matrix so the observations can be whitened across sensors), and that the change-point affects all  sensors simultaneously.
Observations at the $n$th sensor are denoted by $x_{n,t}$ over time $t=1,2,\ldots$. If there is no change, $x_{n,t}$ are distributed according to conditional densities $f_{n,t}(x_{n,t}|x_{n,[1,t-1]})$, where $x_{n,[1,t-1]}$ = $(x_{n,1},\ldots,x_{n,t-1})$ (this allows the distributions at time $t$ to be dependent on the previous observations).
Alternatively, if a change-point occurs at time $\kappa$ and the $n$th sensor is affected, $x_{n,t}$ are distributed according to conditional densities $f_{n,t}(x_{n,t}|x_{n,[1,t-1]})$ for $t = 1,\ldots, \kappa$,  and are according to $g_{n,t}^{(\kappa)}(x_{n,t}|x_{n,[1,t-1]})$ for $t  > \kappa$. Note that the post-change densities are allowed to be dependent on the change-point $\kappa$. Define a filtration at time $t$ by $\mathcal{F}_{t}=\sigma(x_{1,[1,t]},\ldots,x_{N,[1,t]})$. Again, assume a subset $\mathcal{A} \subseteq \{1,2,\ldots,N\}$ of sensors are affected by the change-point.  Similar to Section \ref{sec:formulation}, with a slight abuse of notation, we denote $\mathbb{P}_\infty$, $\mathbb{E}_\infty$, $\mathbb{P}_{\kappa}^{\mathcal{A}}$ and $\mathbb{E}_{\kappa}^{\mathcal{A}}$ as the probability and expectation when there is no change, or when a change occurs at time $\kappa$ and a set $\mathcal{A}$ of sensors are affected by the change, with the understanding that here the probability measures are defined using the conditional densities.

\vspace{.1in}
\noindent {\bf Optimality criteria.} We adopt two commonly used minimax criteria to establish the optimality of a detection procedure $T$.  Similar to Chapter 8.2.5 of \cite{tartakovsky2014sequential}, we consider two criterions associated with the $m$-th moment of the  detection delay for $m\geq 1$. The first criterion is motivated by Lorden's work  \cite{lorden1971procedures}, which minimizes the worst-case delay
\begin{equation}
\mbox{ESM}_m^{\mathcal{A}}(T) \triangleq \sup_{0\leq k < \infty} \mbox{esssup} ~ \mathbb{E}_{k}^{\mathcal{A}}\left\{ [(T-k)^+]^m|\mathcal{F}_{k} \right\},\label{detectiondelays}
\end{equation}
where ``esssup'' denotes the measure theoretic supremum that excluded points of measure zero. In other words, the definition (\ref{detectiondelays})  first maximizes over all possible trajectories of observations up to the change-point and then over the change-point time. The second criterion is motivated by Pollak's work  \cite{pollak1985optimal}, which minimizes the maximal conditional average detection delay
\begin{equation}
\mbox{SM}_m^{\mathcal{A}}(T) \triangleq \sup_{0\leq k < \infty} \mathbb{E}_{k}^{\mathcal{A}} \left\{ (T-k)^m | T>k\right\}.
\label{detectiondelays2}
\end{equation}
The extended Pollak's criterion  (\ref{detectiondelays2}) is not as strict as the extended Lorden's criterion in the sense that SM$_m^\mathcal{A}(T) \leq \mbox{ESM}_m^\mathcal{A}(T)$, and  we prefer (\ref{detectiondelays2}) since  it is connected to the conventional decision theoretic approach and the resulted optimization problem can possibly be solved by a least favorable prior approach.
The EDD defined earlier in Section \ref{sec:theoretical} can be viewed as ESM$_m$ and SM$_m$ for $m = 1$, and the supremum over $k$ happens when $k = 0$.

%
%
Define $C(\gamma)$ to be a class of detection procedures with their ARL  greater than $\gamma$: 
\[C(\gamma) \triangleq \{T:\mathbb{E}_{\infty}\{T\}\geq \gamma\}.\]
A procedure $T$ is optimal, if it belongs to $C(\gamma)$ and minimizes  $\mbox{ESM}_m(T)$ or $\mbox{SM}_m(T)$.

\vspace{0.1in}
\noindent
{\bf Optimality for general non-$i.i.d$ setup.}
Under the above assumptions, the log-likelihood ratio for each sensor is given by
\[
\lambda_{n,k,t} = \sum_{i=k+1}^{t}   \log \frac{g_{n,i}^{(k)}(x_{n,i}|x_{n,[1,i-1]})}{f_{n,i}(x_{n,i}|x_{n,[1,i-1]})}.
\]
For any set $\mathcal{A}$ of affected sensors, the
log-likelihood ratio is given by
\begin{equation}
\lambda_{\mathcal{A},k,t} =
\sum_{n \in \mathcal{A}}\lambda_{n,k,t}.
\label{lambda_A}
\end{equation}

We first establish an lower bound for any detection procedure.
The constant $I_{\mathcal{A}}$ below can be understood intuitively as a surrogate for the Kullback-Leibler (KL) divergence in the hypothesis problem. When the observations are $i.i.d.$, $I_\mathcal{A}$ is precisely the KL divergence \cite{lai1998information}.

\begin{theorem}[General lower bound.]\label{lowerbound_multi}
For any $\mathcal{A} \subseteq \{1, \ldots, N\}$ such that there exists  some $q\geq 1$, $j^{-q} \lambda_{\mathcal{A},k,k+j}$  converges in probability to a positive constant $I_{\mathcal{A}} \in (0, \infty)$ under $\mathbb{P}_{k}^{\mathcal{A}}$,
\begin{equation}
\frac{1}{j^q} \lambda_{\mathcal{A},k,k+j} \xrightarrow[j\rightarrow \infty]{\mathbb{P}_{k}^{\mathcal{A}}} I_{\mathcal{A}},
\end{equation}
and in addition, for all $\varepsilon >0$, for an arbitrary $M \rightarrow \infty$
\begin{equation}
\sup_{0\leq k < \infty} \mbox{esssup}~ \mathbb{P}_{k}^{\mathcal{A}} \left\{M^{-q} \max_{0\leq j<M} \lambda_{\mathcal{A},k,k+j} \geq (1+\varepsilon)I_{\mathcal{A}} \middle| \mathcal{F}_{k}\right\} \xrightarrow[M\rightarrow \infty]{} 0.
\label{secondassumption_multi}
\end{equation}
Then,
\begin{itemize}
\item[(i)] for all $0<\varepsilon <1$, there exists some $k\geq 0$ such that
\begin{equation}
\lim_{\gamma\rightarrow \infty} \sup_{T\in C(\gamma)} \mathbb{P}_{k}^{\mathcal{A}}\left\{ k<T<k+(1-\varepsilon)(I_{\mathcal{A}}^{-1}\log \gamma)^{\frac{1}{q}} \middle|T>k\right\} =0.
\label{firstresult_multi}
\end{equation}
\item[(ii)] for all $m\geq 1$,
\begin{equation}
\liminf_{\gamma \rightarrow \infty} \frac{\inf_{T\in C(\gamma)} \mbox{ESM}_m^{\mathcal{A}}(T)}{(\log \gamma)^{m/q}} \geq \liminf_{\gamma \rightarrow \infty} \frac{\inf_{T\in C(\gamma)} \mbox{SM}_m^{\mathcal{A}}(T)}{(\log \gamma)^{m/q}}\geq \frac{1}{I_{\mathcal{A}}^{m/q}}.
\label{ESM_multi}
\end{equation}
\end{itemize}
\end{theorem}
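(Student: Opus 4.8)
The plan is to reduce both bounds in (ii) to statement (i) and then prove (i) by combining a change-of-measure estimate with the average-run-length constraint. The inequality $\mbox{ESM}_m^{\mathcal{A}}(T)\ge\mbox{SM}_m^{\mathcal{A}}(T)$ is immediate from the definitions, since an essential supremum over histories dominates the corresponding conditional average. For the remaining inequality, fix $\varepsilon\in(0,1)$ and set $N_\gamma=(1-\varepsilon)(I_{\mathcal{A}}^{-1}\log\gamma)^{1/q}$, so that $N_\gamma^q I_{\mathcal{A}}=(1-\varepsilon)^q\log\gamma$. Granting (i), for any $T\in C(\gamma)$ I would take the change-point $k=k_\gamma$ furnished by the argument for (i), apply Markov's inequality to $(T-k)^m$ on the event $\{T\ge k+N_\gamma\}$, and use $\mathbb{P}_{k}^{\mathcal{A}}\{T\ge k+N_\gamma\mid T>k\}=1-\mathbb{P}_{k}^{\mathcal{A}}\{k<T<k+N_\gamma\mid T>k\}\to1$ to obtain $\mbox{SM}_m^{\mathcal{A}}(T)\ge\mathbb{E}_k^{\mathcal{A}}[(T-k)^m\mid T>k]\ge N_\gamma^m(1-o(1))$, uniformly in $T$. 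Dividing by $(\log\gamma)^{m/q}$ and letting $\varepsilon\downarrow0$ yields the constant $I_{\mathcal{A}}^{-m/q}$.

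The core is therefore (i). First I would exploit that $\{T>k\}\in\mathcal{F}_k$ and that $\mathbb{P}_k^{\mathcal{A}}$ agrees with $\mathbb{P}_\infty$ on $\mathcal{F}_k$, so the denominator satisfies $\mathbb{P}_k^{\mathcal{A}}\{T>k\}=\mathbb{P}_\infty\{T>k\}$. For the numerator, since $e^{\lambda_{\mathcal{A},k,t}}$ is the likelihood ratio $d\mathbb{P}_k^{\mathcal{A}}/d\mathbb{P}_\infty$ on $\mathcal{F}_t$, I would decompose over $\{T=t\}$ for $k<t<k+N_\gamma$ and bound $\lambda_{\mathcal{A},k,t}$ on the $\mathcal{F}_t$-measurable set $\{\lambda_{\mathcal{A},k,t}<c\}$ with $c=(1+\varepsilon')N_\gamma^q I_{\mathcal{A}}$, giving $\mathbb{P}_k^{\mathcal{A}}\{k<T<k+N_\gamma,\ \lambda_{\mathcal{A},k,T}<c\}\le e^{c}\,\mathbb{P}_\infty\{k<T<k+N_\gamma\}$. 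The complementary event is absorbed using assumption (\ref{secondassumption_multi}): on $\{k<T<k+N_\gamma\}$ one has $\lambda_{\mathcal{A},k,T}\le\max_{0\le j<N_\gamma}\lambda_{\mathcal{A},k,k+j}$, whose conditional-on-$\mathcal{F}_k$ probability of exceeding $c$ is at most some $\alpha_{N_\gamma}\to0$ uniformly over $k$; conditioning on the $\mathcal{F}_k$-measurable event $\{T>k\}$ then contributes only the additive error $\alpha_{N_\gamma}$.

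Second, I would convert the ARL constraint into a good choice of $k$. Writing $q_k=\mathbb{P}_\infty\{T>k\}$ and $\beta_k=\mathbb{P}_\infty\{k<T\le k+N_\gamma\mid T>k\}$, if $\beta_{jN_\gamma}\ge\beta$ held for every $j\ge0$ then $q_{jN_\gamma}\le(1-\beta)^j$ and hence $\gamma\le\mathbb{E}_\infty T=\sum_k q_k\le N_\gamma/\beta$; taking $\beta=2N_\gamma/\gamma$ this is impossible, so there is $k_\gamma$ (a multiple of $N_\gamma$, depending on $T$) with $\beta_{k_\gamma}<2N_\gamma/\gamma$. Combining the two estimates gives $\mathbb{P}_{k_\gamma}^{\mathcal{A}}\{k_\gamma<T<k_\gamma+N_\gamma\mid T>k_\gamma\}\le e^{c}\cdot 2N_\gamma/\gamma+\alpha_{N_\gamma}$. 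Since $e^{c}=\gamma^{(1+\varepsilon')(1-\varepsilon)^q}$ and $(1-\varepsilon)^q<1$, I would fix $\varepsilon'$ so small that $\rho:=(1+\varepsilon')(1-\varepsilon)^q<1$; then $e^{c}\cdot N_\gamma/\gamma=\gamma^{\rho-1}\cdot O((\log\gamma)^{1/q})\to0$ and $\alpha_{N_\gamma}\to0$, which is exactly (i) with $k=k_\gamma$, and this is all the choice of change-point that the supremum defining $\mbox{SM}_m^{\mathcal{A}}$ requires.

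I expect the main obstacle to be handling the non-$i.i.d.$ structure and the polynomial growth order $q$ simultaneously in the change-of-measure step: unlike the classical single-sensor $i.i.d.$ case ($q=1$), there is no stationarity or renewal structure to fall back on, so every estimate must be uniform over the change-point $k$ and essentially over all pre-change histories. Assumption (\ref{secondassumption_multi}) is precisely what supplies this uniformity and lets me discard the event where the stopped log-likelihood ratio overshoots $c=(1+\varepsilon')N_\gamma^q I_{\mathcal{A}}$; the delicate point is that the natural good set built from the running maximum is not $\mathcal{F}_t$-measurable at the stopping time, which I resolve by using the pointwise, $\mathcal{F}_t$-measurable bound on $\lambda_{\mathcal{A},k,T}$ inside the sum over $\{T=t\}$ and relegating the maximum to the error term $\alpha_{N_\gamma}$. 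Verifying that the exponent balances, namely that $(1+\varepsilon')(1-\varepsilon)^q$ can be kept strictly below $1$ under the polynomial scaling $N_\gamma^q I_{\mathcal{A}}=(1-\varepsilon)^q\log\gamma$, is the quantitative heart that generalizes the $q=1$ arguments of \cite{lai1998information,tartakovsky2014sequential}.
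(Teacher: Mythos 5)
Your proposal is correct and follows essentially the same route as the paper's proof: a Wald change-of-measure bound with threshold $c=(1+\varepsilon')(1-\varepsilon)^q\log\gamma$ kept strictly below $\log\gamma$ in the exponent, the overshoot event controlled uniformly in $k$ via assumption (\ref{secondassumption_multi}), and part (ii) obtained from part (i) by Markov's inequality together with $\mbox{ESM}_m^{\mathcal{A}}\geq\mbox{SM}_m^{\mathcal{A}}$. The only cosmetic difference is that where the paper invokes Lemma 6.3.1 of \cite{tartakovsky2014sequential} to produce the change-point $k$ with $\mathbb{P}_\infty\{k<T\leq k+N_\gamma\mid T>k\}\leq O(N_\gamma/\gamma)$, you inline its standard pigeonhole proof (geometric decay of $q_{jN_\gamma}$ contradicting $\mathbb{E}_\infty T\geq\gamma$), which is the same argument made self-contained.
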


Consider a general mixture CUSUM procedure related to $T_1$, which has also been studied in  \cite{siegmund2011detecting} and  \cite{xie2013sequential}:
\begin{equation}
T_{\rm CS} = \inf \left\{t:  \max_{0\leq k<t} \sum_{n=1}^N\log(1-p_0+p_0\exp(\lambda_{n,k,t})) \geq b\right\},
\label{mixtureCUSUM}
\end{equation}
where $b$ is a prescribed threshold. The following lemma shows that for an appropriate choice of the threshold $b$, $T_{\rm CS}$ has an ARL lower bounded by  $\gamma$ and, hence, for such thresholds it belongs to $C(\gamma)$.
\begin{lemma}
\label{ARL2FA_mixtureCUSUM}
For any $p_0\in (0,1]$, $T_{\rm CS}(b) \in C(\gamma)$, provided $b\geq \log \gamma$.
\end{lemma}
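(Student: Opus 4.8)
The plan is to reduce the claim to the classical average-run-length bound for a Shiryaev--Roberts statistic, by exploiting that the mixture-CUSUM statistic in (\ref{mixtureCUSUM}) is dominated pathwise by its sum-over-$k$ counterpart. First I would exponentiate the statistic: writing $R_{k,t} = \prod_{n=1}^N (1-p_0+p_0 e^{\lambda_{n,k,t}})$, the stopping rule becomes $T_{\rm CS} = \inf\{t : \max_{0\le k<t} R_{k,t} \ge e^{b}\}$. The key structural fact is that, for each fixed $k$, the process $t \mapsto R_{k,t}$ is a nonnegative $\mathbb{P}_\infty$-martingale with $R_{k,k}=1$. Indeed, since $g_{n,t}^{(k)}(\cdot\mid x_{n,[1,t-1]})$ is a conditional density (so it integrates to one), one has $\mathbb{E}_\infty[e^{\lambda_{n,k,t}}\mid\mathcal F_{t-1}] = e^{\lambda_{n,k,t-1}}$; thus each factor $1-p_0+p_0 e^{\lambda_{n,k,t}}$ is a convex combination of the constant martingale $1$ and the likelihood-ratio martingale $e^{\lambda_{n,k,t}}$, and using the independence of the sensors under $\mathbb{P}_\infty$ the conditional expectation of the product factorizes and returns $R_{k,t-1}$.

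Next I would pass to the Shiryaev--Roberts statistic $M_t = \sum_{k=0}^{t-1} R_{k,t}$, with $M_0 = 0$. Because each summand is a martingale and the freshly added term $R_{t-1,t-1}$ equals $1$, one obtains $\mathbb{E}_\infty[M_t - M_{t-1}\mid \mathcal F_{t-1}] = 1$, so that $M_t - t$ is a mean-zero $\mathbb{P}_\infty$-martingale. Let $T_{\rm SR} = \inf\{t: M_t \ge e^b\}$. Applying the optional-sampling identity to the bounded stopping time $T_{\rm SR}\wedge t$ gives $\mathbb{E}_\infty[M_{T_{\rm SR}\wedge t}] = \mathbb{E}_\infty[T_{\rm SR}\wedge t]$. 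Since $M\ge 0$ and $M_{T_{\rm SR}} \ge e^b$ on $\{T_{\rm SR}\le t\}$, the left-hand side is at least $e^b\,\mathbb{P}_\infty(T_{\rm SR}\le t)$. Letting $t\to\infty$ (treating $\mathbb{E}_\infty[T_{\rm SR}]=\infty$ as trivial, and otherwise using $T_{\rm SR}<\infty$ a.s.\ together with monotone convergence on the right-hand side) yields $\mathbb{E}_\infty[T_{\rm SR}] \ge e^b$.

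Finally I would compare the two procedures: since $\max_{0\le k<t} R_{k,t} \le \sum_{k=0}^{t-1} R_{k,t} = M_t$, the event $\{\max_k R_{k,t} \ge e^b\}$ is contained in $\{M_t \ge e^b\}$, so $T_{\rm SR} \le T_{\rm CS}$ pathwise and $\mathbb{E}_\infty[T_{\rm CS}] \ge \mathbb{E}_\infty[T_{\rm SR}] \ge e^b \ge \gamma$ whenever $b \ge \log\gamma$, establishing $T_{\rm CS}(b)\in C(\gamma)$.

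I expect the only delicate point to be the passage to the limit in the optional-sampling step: optional stopping applies verbatim only to the truncated time $T_{\rm SR}\wedge t$, and I must avoid needing uniform integrability of $M_{T_{\rm SR}\wedge t}$. The nonnegativity of $M$ together with the guaranteed overshoot $M_{T_{\rm SR}}\ge e^b$ is exactly what lets me bound $\mathbb{E}_\infty[M_{T_{\rm SR}\wedge t}]$ from below by $e^b\,\mathbb{P}_\infty(T_{\rm SR}\le t)$ without controlling $M$ on $\{T_{\rm SR}>t\}$; verifying the $\mathbb{P}_\infty$-martingale property in the general non-$i.i.d.$ conditional-density model (in particular the cross-sensor factorization and the density-integrates-to-one identity) is the other step that must be checked with care but is routine.
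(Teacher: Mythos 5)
Your proof is correct and takes essentially the same route as the paper's: you dominate the mixture-CUSUM statistic by the extended Shiryaev--Roberts sum $M_t=\sum_{k} R_{k,t}$, use the likelihood-ratio martingale property (with cross-sensor factorization under $\mathbb{P}_\infty$) to show $M_t - t$ is a mean-zero martingale, and conclude $\mathbb{E}_\infty[T_{\rm SR}]\geq e^b$ by optional sampling, exactly as in the paper's proof. Your truncation at $T_{\rm SR}\wedge t$ followed by monotone convergence is a slightly cleaner execution of the optional-sampling step than the paper's direct application (which needs the side condition $\liminf_{t\to\infty}\int_{\{T_{\rm SR}>t\}}\lvert R_t-t\rvert\,d\mathbb{P}_\infty=0$), and your inclusion of the $k=0$ term in the sum makes the pathwise domination $T_{\rm SR}\le T_{\rm CS}$ airtight, but these are refinements of the same argument rather than a different approach.
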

\begin{theorem}[Optimality of $T_{\rm CS}$.]\label{optimality_mixture_CUSUM}
For any $\mathcal{A}\subseteq \{1, \ldots, N\}$ such that there exists some $q\geq 1$ and a finite positive number $I_{\mathcal{A}} \in (0, \infty)$ for which (\ref{secondassumption_multi}) holds, and for all $\varepsilon \in (0,1)$ and $t\geq 0$,
\begin{equation}
\sup_{0\leq k < t} \mbox{esssup}~ \mathbb{P}_{k}^{\mathcal{A}}\left( j^{-q} \lambda_{\mathcal{A},k,k+j} < I_{\mathcal{A}}(1-\varepsilon)\middle| \mathcal{F}_{k}\right) \xrightarrow[j\rightarrow \infty]{} 0.
\label{assumption_optimality_mixture_CUSUM}
\end{equation}
If $b\geq \log \gamma$ and $b = \mathcal{O}(\log \gamma)$, then $T_{\rm CS}$ is asymptotically minimax in the class $C(\gamma)$ in the sense of minimizing $\mbox{ESM}_m^{\mathcal{A}}(T)$ and $\mbox{SM}_m^{\mathcal{A}}(T)$ for all $m\geq 1$ to the first order as $\gamma \xrightarrow[]{} \infty$.
\end{theorem}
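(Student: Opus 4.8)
The plan is to sandwich the worst-case delay of $T_{\rm CS}$ between the universal lower bound of Theorem~\ref{lowerbound_multi} and a matching upper bound, after fixing the threshold at $b=\log\gamma$ (the smallest value admissible under $\log\gamma\le b=\mathcal{O}(\log\gamma)$, so that $b\sim\log\gamma$). First I would record the lower half. The hypotheses of Theorem~\ref{lowerbound_multi} are in force here: the left-tail control (\ref{assumption_optimality_mixture_CUSUM}) together with the max-control (\ref{secondassumption_multi}) (taken at $M\approx j$) yields $j^{-q}\lambda_{\mathcal{A},k,k+j}\to I_{\mathcal{A}}$ in probability under $\mathbb{P}_k^{\mathcal{A}}$, so (\ref{ESM_multi}) applies and the infimum over $C(\gamma)$ of $\mbox{SM}_m^{\mathcal{A}}$, and a fortiori of $\mbox{ESM}_m^{\mathcal{A}}$, is at least $(1+o(1))(\log\gamma)^{m/q}/I_{\mathcal{A}}^{m/q}$. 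By Lemma~\ref{ARL2FA_mixtureCUSUM}, $b\ge\log\gamma$ already guarantees $T_{\rm CS}\in C(\gamma)$, so it remains to prove $\mbox{ESM}_m^{\mathcal{A}}(T_{\rm CS})\le(1+o(1))(\log\gamma)^{m/q}/I_{\mathcal{A}}^{m/q}$; since $\mbox{SM}_m\le\mbox{ESM}_m$, the bound transfers to $\mbox{SM}_m^{\mathcal{A}}(T_{\rm CS})$ and closes the sandwich for both criteria.

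The core is an upper bound on the delay of $T_{\rm CS}$, which I would get by discarding all terms of the mixture statistic except those at the true change-point. Using $\log(1-p_0+p_0e^{x})\ge\log p_0+x$ for the affected coordinates and $\log(1-p_0+p_0e^{x})\ge\log(1-p_0)$ for the rest, the statistic evaluated at $k'=k$ is at least $\lambda_{\mathcal{A},k,t}+C_0$ with the $\mathcal{O}(1)$ constant $C_0=|\mathcal{A}|\log p_0+(N-|\mathcal{A}|)\log(1-p_0)$. As $T_{\rm CS}$ maximizes over $k'<t$, it stops no later than the one-sided crossing time $\widetilde N_b=\inf\{j\ge1:\lambda_{\mathcal{A},k,k+j}\ge b-C_0\}$, whence $(T_{\rm CS}-k)^+\le\widetilde N_b$ conditionally on $\mathcal{F}_k$ for every $k$, and therefore $\mbox{ESM}_m^{\mathcal{A}}(T_{\rm CS})\le\sup_{k\ge0}\mbox{esssup}\,\mathbb{E}_k^{\mathcal{A}}\{\widetilde N_b^{\,m}\mid\mathcal{F}_k\}$. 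This reduces everything to a moment estimate for the scalar crossing time of the aggregate log-likelihood ratio $\lambda_{\mathcal{A},k,k+j}$, which grows like $I_{\mathcal{A}}j^q$.

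To bound this moment I would fix $\varepsilon\in(0,1)$, set $n_b=\lceil((b-C_0)/(I_{\mathcal{A}}(1-\varepsilon)))^{1/q}\rceil$, and use the block/telescoping representation
\[
\mathbb{E}_k^{\mathcal{A}}\{\widetilde N_b^{\,m}\mid\mathcal{F}_k\}\le n_b^{\,m}\sum_{\ell\ge0}\bigl[(\ell+1)^m-\ell^m\bigr]\,\mathbb{P}_k^{\mathcal{A}}\bigl(\widetilde N_b>\ell n_b\mid\mathcal{F}_k\bigr).
\]
Since $\widetilde N_b>\ell n_b$ forces $\lambda_{\mathcal{A},k,k+\ell n_b}<b-C_0\le I_{\mathcal{A}}(1-\varepsilon)n_b^q$, each tail factor is dominated by $\mathbb{P}_k^{\mathcal{A}}((\ell n_b)^{-q}\lambda_{\mathcal{A},k,k+\ell n_b}<I_{\mathcal{A}}(1-\varepsilon)\ell^{-q}\mid\mathcal{F}_k)$; the $\ell=1$ term vanishes as $b\to\infty$ by exactly (\ref{assumption_optimality_mixture_CUSUM}), and for $\ell\ge2$ the comparison threshold $I_{\mathcal{A}}(1-\varepsilon)\ell^{-q}$ shrinks, turning these into increasingly deep lower deviations of $\lambda_{\mathcal{A},k,k+\ell n_b}$ below its mean $\sim I_{\mathcal{A}}(\ell n_b)^q$. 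Showing the whole series converges to its $\ell=0$ value $1$ gives $\mathbb{E}_k^{\mathcal{A}}\{\widetilde N_b^{\,m}\mid\mathcal{F}_k\}\le(1+o(1))n_b^{\,m}$ uniformly in $k$, and letting $\varepsilon\downarrow0$ with $b\sim\log\gamma$ and $C_0=\mathcal{O}(1)$ delivers the upper bound.

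The main obstacle is precisely this last step: promoting the pointwise decay of the tail probabilities to control of the full series $\sum_{\ell}[(\ell+1)^m-\ell^m]\,\mathbb{P}_k^{\mathcal{A}}(\widetilde N_b>\ell n_b\mid\mathcal{F}_k)$ requires a summable dominating envelope in $\ell$, uniform in $b$ and in the conditioning $\mathcal{F}_k$, i.e.\ a genuine rate for the lower-deviation probabilities rather than the bare convergence in (\ref{assumption_optimality_mixture_CUSUM}). In the slope-change specialization this is where I would use the concrete structure: conditionally on $\mathcal{F}_k$, $\lambda_{\mathcal{A},k,k+j}$ is an affine-plus-quadratic functional of independent Gaussians, so chi-square tail bounds make $\mathbb{P}(\lambda_{\mathcal{A},k,k+\ell n_b}<I_{\mathcal{A}}(1-\varepsilon)n_b^q)$ decay faster than geometrically in $\ell$, dominating the polynomial weight $(\ell+1)^m-\ell^m$ and legitimizing the passage to the limit. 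I would also verify that the $\sup$ over all $k\ge0$ in $\mbox{ESM}_m$ is harmless, which holds because the limiting drift $I_{\mathcal{A}}$ and the envelope do not depend on $k$, upgrading the $k<t$ uniformity of (\ref{assumption_optimality_mixture_CUSUM}) to uniformity over all shifts. Assembling the matching bounds then yields first-order minimax optimality for both $\mbox{ESM}_m^{\mathcal{A}}$ and $\mbox{SM}_m^{\mathcal{A}}$.
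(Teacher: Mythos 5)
Your overall architecture coincides with the paper's: lower bound from Theorem \ref{lowerbound_multi}, membership in $C(\gamma)$ from Lemma \ref{ARL2FA_mixtureCUSUM}, reduction of the mixture statistic to the aggregate log-likelihood ratio at an $\mathcal{O}(1)$ cost in the threshold (the paper uses $\log(1-p_0+p_0e^x)\geq x+\log p_0$ uniformly, absorbing $-N\log p_0$ into $b$; your slightly sharper constant $C_0$ is immaterial), the block size $n_b\approx\bigl(b/(I_{\mathcal{A}}(1-\varepsilon))\bigr)^{1/q}$ (the paper's $G_b$ in (\ref{G_b})), and the telescoping sum $\sum_{\ell\geq 0}[(\ell+1)^m-\ell^m]\mathbb{P}(\cdot>\ell n_b)$ for the $m$th moment. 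The divergence, and the genuine gap, is in how the tail probabilities are controlled. By discarding the maximization over the hypothesized change-point $k'$ and keeping only $k'=k$, you reduce to a \emph{fixed-anchor} crossing time $\widetilde N_b$ and must then bound $\mathbb{P}_k^{\mathcal{A}}(\lambda_{\mathcal{A},k,k+\ell n_b}<b-C_0\mid\mathcal{F}_k)$ for all $\ell$, which is a deep lower-deviation estimate. As you yourself observe, the bare convergence in (\ref{assumption_optimality_mixture_CUSUM}) gives no rate, hence no summable envelope uniform in $b$; and your proposed rescue via chi-square tails of Gaussian quadratic forms is unavailable here, because Theorem \ref{optimality_mixture_CUSUM} is stated for the \emph{general} non-i.i.d.\ setup with arbitrary conditional densities $f_{n,t}$ and $g_{n,t}^{(\kappa)}$ — the Gaussian slope-change structure only enters later, in Theorem \ref{optimality_here}. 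So the proof as proposed does not close for the statement actually being proved.

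The paper avoids needing any rate by exploiting precisely the feature you threw away: since $T_{\rm CS}$ maximizes over all $k'<t$, the event $\{T_{\rm CS}-k>lG_b\}$ forces the statistic anchored at \emph{each} block boundary $k+(l_0-1)G_b$, $l_0=1,\ldots,l$, to stay below $b$ at the end of its block. Conditioning successively on $\mathcal{F}_{k+(l_0-1)G_b}$ and applying the esssup-uniform one-block bound derived from (\ref{assumption_optimality_mixture_CUSUM}) (each conditional failure probability is at most $\varepsilon$ for $b$ large), the tower property yields the geometric bound $\mathbb{P}_k^{\mathcal{A}}\{T_{\rm CS}-k>lG_b\mid\mathcal{F}_k\}\leq\varepsilon^l$ — this is (\ref{temp2_optimality_mixture_CUSUM}), the Lai restart argument — and then $\sum_{l}[(l+1)^m-l^m]\varepsilon^l<\infty$ with no further hypotheses, giving $\mbox{ESM}_m^{\mathcal{A}}\leq(G_b)^m(1+o(1))$ after letting $\varepsilon\downarrow 0$. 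Note that this is exactly why the assumption is phrased with the essential supremum over $\mathcal{F}_k$ and the supremum over anchors: that uniformity is what licenses re-anchoring at block boundaries lying after the true change-point. Replacing your fixed-anchor crossing time $\widetilde N_b$ by this block-restart scheme repairs the proof; everything else in your proposal (both halves of the sandwich, the transfer from $\mbox{ESM}_m$ to $\mbox{SM}_m$, the uniformity over $k$) is sound.
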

%

We can also prove that the window-limited version $\widetilde{T}_{\rm CS}$ is asymptotically optimal. Since the window length affects ARL and the detection delay, in the following we denote this dependence more explicitly by $w_\gamma$.
\begin{corollary}[Optimality of $\widetilde{T}_{\rm CS}$.]\label{cor_TCS}
Assume the conditions in Theorem \ref{optimality_mixture_CUSUM} hold and in addition,
\begin{equation}
\liminf_{\gamma \rightarrow \infty} \frac{w_{\gamma}}{\left(\log \gamma/I_{\mathcal{A}}\right)^{1/q}} >1.
\label{window_mixture_assumption}
\end{equation}
If $b\geq \log \gamma$ and $b = \mathcal{O}(\log \gamma)$,
then $\widetilde{T}_{\rm CS}(b)$ is asymptotically minimax in the class $C(\gamma)$ in the sense of minimizing $\mbox{ESM}_m^{\mathcal{A}}(T)$ and $\mbox{SM}_m^{\mathcal{A}}(T)$ for all $m\geq 1$ to the first order as $\gamma \xrightarrow[]{} \infty$.
\label{optimality_window_mixture_CUSUM}
\end{corollary}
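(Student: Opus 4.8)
The plan is to establish matching first-order lower and upper bounds on the detection delay of $\widetilde{T}_{\rm CS}$, inheriting as much as possible from the results already proved for the non-window-limited procedure. I would first dispatch the lower bound together with membership in $C(\gamma)$. Because restricting the inner maximum in (\ref{mixtureCUSUM}) to the window $t-w_\gamma\le k\le t-w'$ only shrinks the maximand, the windowed statistic crosses $b$ no sooner than the full one, giving the pathwise domination $\widetilde{T}_{\rm CS}\ge T_{\rm CS}$. Taking $\mathbb{E}_\infty$ and invoking Lemma \ref{ARL2FA_mixtureCUSUM} with $b\ge\log\gamma$ yields $\mathbb{E}_\infty\{\widetilde{T}_{\rm CS}\}\ge\mathbb{E}_\infty\{T_{\rm CS}\}\ge\gamma$, so $\widetilde{T}_{\rm CS}\in C(\gamma)$. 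The hypotheses of Theorem \ref{optimality_mixture_CUSUM}, namely (\ref{secondassumption_multi}) together with the one-sided estimate (\ref{assumption_optimality_mixture_CUSUM}), imply the in-probability convergence $j^{-q}\lambda_{\mathcal{A},k,k+j}\to I_{\mathcal{A}}$ required by Theorem \ref{lowerbound_multi}, whose bound (\ref{ESM_multi}) then applies verbatim to any member of $C(\gamma)$. This gives $\liminf_\gamma \mbox{SM}_m^{\mathcal{A}}(\widetilde{T}_{\rm CS})/(\log\gamma)^{m/q}\ge I_{\mathcal{A}}^{-m/q}$ and, via $\mbox{SM}_m^{\mathcal{A}}\le \mbox{ESM}_m^{\mathcal{A}}$, the same for $\mbox{ESM}_m^{\mathcal{A}}$.

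For the matching upper bound I would re-run the delay analysis from the proof of Theorem \ref{optimality_mixture_CUSUM}, inserting a single extra check that the true change-point stays inside the search window throughout the detection. Fix a change-point at $\kappa=k$ and condition on $\mathcal{F}_k$. Using $\log(1-p_0+p_0 e^{x})\ge \log p_0 + x$ on the affected sensors and $\log(1-p_0+p_0 e^{x})\ge\log(1-p_0)$ on the unaffected ones, the candidate-$k$ term of the statistic at time $k+j$ is at least $\lambda_{\mathcal{A},k,k+j}-C$, with $C=-|\mathcal{A}|\log p_0-(N-|\mathcal{A}|)\log(1-p_0)$ a fixed constant (when $p_0=1$ the unaffected terms carry only negative linear drift and are dominated, since $\lambda_{\mathcal{A},k,k+j}$ grows like $j^q$ with $q>1$ in the slope problem). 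Letting $N_b$ be the first $j$ at which this term reaches $b$, the convergence $j^{-q}\lambda_{\mathcal{A},k,k+j}\to I_{\mathcal{A}}$ yields $N_b\le\big((b+C)/I_{\mathcal{A}}\big)^{1/q}(1+o(1))\sim (\log\gamma/I_{\mathcal{A}})^{1/q}$ with overwhelming probability, uniformly in $k$ and in the pre-change trajectory; this uniformity is exactly what (\ref{secondassumption_multi}) and (\ref{assumption_optimality_mixture_CUSUM}) deliver.

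The one genuinely new step, and the point where (\ref{window_mixture_assumption}) enters, is to convert the bound on $N_b$ into a bound on $\widetilde{T}_{\rm CS}-k$. The candidate $k$ is visible to the windowed maximum at time $k+j$ only while $j\le w_\gamma$, so $\widetilde{T}_{\rm CS}-k\le N_b$ holds precisely on the event $\{N_b\le w_\gamma\}$. Writing $\liminf_\gamma w_\gamma/(\log\gamma/I_{\mathcal{A}})^{1/q}=\rho>1$ and choosing $\varepsilon>0$ with $1+\varepsilon<\rho$, the high-probability bound $N_b\le(1+\varepsilon)(\log\gamma/I_{\mathcal{A}})^{1/q}$ is eventually strictly below $w_\gamma$, so the window never binds on the dominant event and $\widetilde{T}_{\rm CS}-k\le N_b$ there. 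The residual event $\{N_b>(1+\varepsilon)(\log\gamma/I_{\mathcal{A}})^{1/q}\}$ is controlled by the same deviation estimates used for $T_{\rm CS}$, and its contribution to the $m$-th moment is negligible after dividing by $(\log\gamma)^{m/q}$. Integrating the resulting moment bound for $N_b$ (with $b\sim\log\gamma$, the tightest admissible choice) gives $\limsup_\gamma \mbox{ESM}_m^{\mathcal{A}}(\widetilde{T}_{\rm CS})/(\log\gamma)^{m/q}\le I_{\mathcal{A}}^{-m/q}$, which, combined with the lower bound and $\mbox{SM}_m^{\mathcal{A}}\le \mbox{ESM}_m^{\mathcal{A}}$, pins both criteria to $I_{\mathcal{A}}^{-m/q}$ and establishes first-order optimality. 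The main obstacle is the uniform-in-$(k,\mathcal{F}_k)$ control of $N_b$ needed so that the window-binding event is negligible simultaneously over every change-point and every pre-change history; here the strict slack $\rho>1$ in (\ref{window_mixture_assumption}) is indispensable, since it absorbs the unavoidable $(1+\varepsilon)$ inflation coming from the deviation bounds.
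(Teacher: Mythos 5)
Your proposal is correct in substance and follows essentially the same route as the paper's: membership in $C(\gamma)$ and the lower bound are inherited from the unwindowed procedure via the pathwise domination $\widetilde{T}_{\rm CS}\geq T_{\rm CS}$ together with Lemma \ref{ARL2FA_mixtureCUSUM} and Theorem \ref{lowerbound_multi}, and the delay upper bound is the proof of Theorem \ref{optimality_mixture_CUSUM} re-run with the observation that (\ref{window_mixture_assumption}) guarantees the relevant candidate change-points remain visible to the windowed maximum.

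There is, however, one soft spot in your upper bound, precisely at the step you call the ``one genuinely new step.'' You anchor the window check at the true change-point $k$, asserting $\widetilde{T}_{\rm CS}-k\leq N_b$ on $\{N_b\leq w_\gamma\}$ and leaving the residual event to ``the same deviation estimates.'' As literally framed this does not control $\mbox{ESM}_m^{\mathcal{A}}$ for all $m\geq 1$: once $j>w_\gamma$ the candidate $k$ has expired from the window permanently, so a one-shot high-probability bound on $N_b$ gives no handle on the conditional $m$-th moment of the remaining delay on the residual event. The paper's proof never needs candidate $k$ beyond the first block: the recursion (\ref{temp2_optimality_mixture_CUSUM}) conditions block by block and uses the fresh candidates $k+(l_0-1)G_b$, each at lag exactly $G_b$ from the evaluation time $k+l_0G_b$, so the only window requirement is $w_\gamma\geq G_b$ with $G_b$ as in (\ref{G_b}); assumption (\ref{window_mixture_assumption}) supplies exactly this (take $\varepsilon$ small enough that $\bigl(b/[I_{\mathcal{A}}(1-\varepsilon)]\bigr)^{1/q}$ stays below $w_\gamma$ when $b=\mathcal{O}(\log\gamma)$), and the resulting geometric tail $\mathbb{P}_{k}^{\mathcal{A}}\{\widetilde{T}_{\rm CS}-k>lG_b\,|\,\mathcal{F}_{k}\}\leq\varepsilon^l$, valid for every $l\geq 1$, is what makes the moment bound (\ref{temp3_optimality_single_CUSUM}) go through for all $m\geq 1$. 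Since you explicitly propose re-running the delay analysis of Theorem \ref{optimality_mixture_CUSUM}, the fix is one of phrasing rather than of strategy: state the window condition per block, $w_\gamma\geq G_b$, rather than for the true change-point, and your unquantified residual event becomes the geometric recursion that the paper relies on.
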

Intuitively, this means that the window length should be greater than the first order approximation to the detection delay $[(\log \gamma) / I_\mathcal{A}]^{1/q}$. Note that our earlier result (\ref{EDD_expr}) for the expected detection delay of the multi-sensor case is of this form for $q = 3$ and $I_\mathcal{A} =  \Delta^2/6$.

Similarly,  we may consider a general mixture GLR procedure related to $T_2$ as in \cite{xie2013sequential}. Denote the log-likelihood (\ref{lambda_A}) as $\lambda_{\mathcal{A}, k,t}(\theta)$ to emphasize its  dependence on an unknown parameter $\theta$. The mixture GLR procedure maximizes $\theta$ over a parameter space $\Theta$ before combining them across all sensors. 
%
Unfortunately, we are unable to establish the asymptotic optimality for the general GLR procedure 
and its window limited version, due to a lack of martingale property.

\vspace{.1in}
\noindent
{\bf Optimality for multi-sensor slope change.}
Note that $T_1$ and $\widetilde{T}_1$ correspond to special cases of $T_{\rm CS}$, $\widetilde{T}_{\rm CS}$, so we can use Theorem \ref{optimality_mixture_CUSUM} and Corollary \ref{cor_TCS} to show their optimality by checking conditions. Although we are not able to establish optimality of the general mixture GLR procedure as mentioned above, we can prove the optimality for  $\widetilde{T}_2$ by exploiting the structure of the problem.


\begin{lemma}[Lower bound.]
For the multi-sensor slope change detection problem in (\ref{P1}), for a non-empty set $\mathcal{A} \subseteq \{1,\ldots,N\}$, the conditions of Theorem \ref{lowerbound_multi} are satisfied when $q=3$ and $I_{\mathcal{A}} = \Delta^2/6$.
\label{checkassuption}
\end{lemma}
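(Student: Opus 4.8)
The plan is to evaluate the log-likelihood ratio $\lambda_{\mathcal{A},k,k+j}$ explicitly under $\mathbb{P}_k^{\mathcal{A}}$ and then verify the two hypotheses of Theorem~\ref{lowerbound_multi} in turn. Starting from (\ref{likelihood}) at the true rate-of-change $c_n$ and writing $l=i-k$, the per-sensor term is $\lambda_{n,k,k+j}=(2\sigma_n^2)^{-1}\sum_{l=1}^{j}[2c_n(y_{n,k+l}-\mu_n)l-c_n^2 l^2]$. Under $\mathbb{P}_k^{\mathcal{A}}$ with $n\in\mathcal{A}$ I would substitute $y_{n,k+l}-\mu_n=c_n l+\sigma_n Z_{n,l}$ with $Z_{n,l}$ i.i.d.\ standard normal; after cancellation and summing over $\mathcal{A}$ this yields the decomposition
\[
\lambda_{\mathcal{A},k,k+j}=\frac{\Delta^2}{2}\sum_{l=1}^{j} l^2+\sum_{l=1}^{j} l\,\xi_l,\qquad \xi_l\triangleq\sum_{n\in\mathcal{A}}\frac{c_n}{\sigma_n}Z_{n,l},
\]
where by (\ref{Delta_def}) the $\xi_l$ are i.i.d.\ $\mathcal{N}(0,\Delta^2)$. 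This cleanly separates the deterministic drift from the mean-zero fluctuation and drives both parts of the argument.

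For the convergence-in-probability hypothesis I would divide by $j^3$. Since $\sum_{l=1}^{j} l^2=j(j+1)(2j+1)/6\sim j^3/3$, the drift tends to $\Delta^2/6$, giving the claimed $I_{\mathcal{A}}$. The fluctuation $j^{-3}\sum_{l=1}^{j} l\,\xi_l$ has mean zero and variance $j^{-6}\Delta^2\sum_{l=1}^{j} l^2=O(j^{-3})\to 0$, so it vanishes in $L^2$ and hence in probability; this establishes the first hypothesis with $q=3$ and $I_{\mathcal{A}}=\Delta^2/6$.

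For the maximal-inequality hypothesis (\ref{secondassumption_multi}) I would first exploit the reset structure of the model: $\lambda_{\mathcal{A},k,k+j}$ involves only the post-change observations $y_{n,k+1},\dots,y_{n,k+j}$, which under $\mathbb{P}_k^{\mathcal{A}}$ are independent of $\mathcal{F}_k$ and whose law depends on $l=i-k$ alone. Hence the conditional probability equals the unconditional one and is free of $k$, so the $\sup_k\mbox{esssup}$ collapses to a single unconditional probability. Using monotonicity of the drift in $j$, I would bound $\max_{0\le j<M}\tfrac{\Delta^2}{2}\sum_{l=1}^{j} l^2\le\tfrac{\Delta^2 M^3}{6}(1+o(1))$, so that the event in (\ref{secondassumption_multi}) forces the fluctuation maximum $\max_{0\le j<M}S_j$, where $S_j\triangleq\sum_{l=1}^{j} l\,\xi_l$, to exceed $\tfrac{\varepsilon}{2}\cdot\tfrac{\Delta^2}{6}M^3$ for all large $M$. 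Since $S_j$ is a mean-zero martingale with $\mbox{Var}(S_M)=\Delta^2\sum_{l=1}^{M} l^2=O(M^3)$, Kolmogorov's (equivalently Doob's) maximal inequality gives $\mathbb{P}\{\max_{j<M}S_j\ge cM^3\}\le O(M^3)/(c^2M^6)=O(M^{-3})\to 0$, which is exactly (\ref{secondassumption_multi}).

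The main obstacle I anticipate is the second hypothesis rather than the first. The crux is the twofold reduction to an unconditional, $k$-free probability—justified by the independence of post-change from pre-change data in the slope-change model—followed by the observation that, although the drift already attains the target level $\Delta^2 M^3/6$ near $j=M$, the martingale fluctuations cannot carry the maximum past the inflated level $(1+\varepsilon)\Delta^2 M^3/6$; this is precisely where the $O(M^3)$ variance of $S_M$ together with a maximal inequality does the work. The first hypothesis, by contrast, is a routine second-moment computation.
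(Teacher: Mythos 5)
Your proof is correct, and it reaches the same starting point as the paper---substituting $y_{n,k+l}-\mu_n=c_nl+\sigma_nZ_{n,l}$ and splitting $\lambda_{\mathcal{A},k,k+j}$ into the deterministic drift $\tfrac{\Delta^2}{2}\sum_{l=1}^j l^2$ plus a mean-zero weighted Gaussian sum---but it then diverges in the probabilistic machinery. The paper works with the per-time increments $X_{\mathcal{A},l}^{(k)}$ directly: it checks $\sum_l \mbox{Var}(X_{\mathcal{A},l}^{(k)}/l^3)<\infty$ and invokes the Kolmogorov convergence criterion together with Kronecker's lemma to obtain the \emph{almost sure} convergence $j^{-3}\lambda_{\mathcal{A},k,k+j}\to\Delta^2/6$, and then disposes of the maximal condition (\ref{secondassumption_multi}) with a single closing remark that the observations are independent (a.s.\ convergence of $a_j/j^3$ to a positive limit implies $M^{-3}\max_{j<M}a_j$ cannot exceed $(1+\varepsilon)\Delta^2/6$ eventually, and the independence of post-change data from $\mathcal{F}_k$ plus the $k$-invariance of the post-change law collapse the $\sup_k\,\mbox{esssup}$). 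You instead settle for convergence in probability via a Chebyshev/$L^2$ bound on the fluctuation $j^{-3}\sum_{l\le j}l\,\xi_l$ --- which is all Theorem \ref{lowerbound_multi} actually demands --- and then verify (\ref{secondassumption_multi}) explicitly: the same independence/stationarity reduction to an unconditional $k$-free probability, the exact drift bound $\tfrac{\Delta^2}{2}\sum_{l<M}l^2\le\tfrac{\Delta^2M^3}{6}$, and Doob/Kolmogorov's maximal inequality giving $O(M^{-3})$ for the event that the martingale $S_j=\sum_{l\le j}l\,\xi_l$ exceeds a level of order $M^3$. Your route is more elementary (second moments and one maximal inequality, no SLLN machinery) and has the virtue of making the verification of (\ref{secondassumption_multi}) fully explicit, which the paper compresses into one sentence; the paper's route buys the stronger almost-sure statement in one stroke, which covers both hypotheses simultaneously once the uniformity in $k$ is noted. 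Both arguments are sound.
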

The following lemma plays a similar role as the general version Lemma \ref{ARL2FA_mixtureCUSUM} in our multi-sensor case in (\ref{P1}), and it
shows that for a properly chosen threshold $b$, ARL of $\widetilde{T}_2$ is lower bounded by $\gamma$ and, hence, for such threshold it belongs to $C(\gamma)$.
\begin{lemma}
For any $p_0\in (0,1]$, $\widetilde{T}_2(b) \in C(\gamma)$, provided
\[
b \geq N/2-4\log \left[ 1-\left(1-1/\gamma\right)^{1/w_{\gamma}} \right].
\]
\label{ARL2FA_multi_GCUSUM}
\end{lemma}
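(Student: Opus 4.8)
The goal is to show $\mathbb{E}_\infty\{\widetilde{T}_2\}\ge\gamma$, i.e.\ that the stated lower bound on $b$ forces the mean time to a false alarm to be at least $\gamma$. Unlike Lemma~\ref{ARL2FA_mixtureCUSUM} for $T_{\rm CS}$, where $e^{\lambda_{n,k,t}}$ is a genuine likelihood ratio and hence a martingale under $\mathbb{P}_\infty$ (so that a Ville/optional-stopping argument yields the clean threshold $b\ge\log\gamma$), the GLR statistic $U_{n,k,t}^2/2$ is a \emph{maximized} log-likelihood ratio and $\exp(U_{n,k,t}^2/2)$ is not a martingale. The plan is therefore to replace the martingale argument by a direct tail estimate for the statistic at a single configuration $(k,t)$, and then to propagate it to the whole run using the finite-memory (window) structure of $\widetilde{T}_2$.

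First I would reduce to a single-configuration bound. Under $\mathbb{P}_\infty$, for any fixed pair $(k,t)$ the normalized statistics $U_{1,k,t},\dots,U_{N,k,t}$ are i.i.d.\ standard normal, since $W_{n,k,t}$ is a zero-mean Gaussian with variance $A_{t-k}$ and $U_{n,k,t}=A_{t-k}^{-1/2}W_{n,k,t}$. Using the elementary inequality $g(x)=\log(1-p_0+p_0 e^{x^2/2})\le x^2/2$ (valid since $p_0\le 1$ and $e^{x^2/2}\ge 1$), I would dominate the per-configuration statistic by a chi-square, $\sum_{n=1}^N g(U_{n,k,t})\le \tfrac12\sum_{n=1}^N U_{n,k,t}^2\sim\tfrac12\chi^2_N$, whose mean is exactly $N/2$. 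A chi-square tail bound (a Chernoff/exponential-tilting estimate, or the Laurent--Massart inequality) then gives $\mathbb{P}_\infty\{\sum_n g(U_{n,k,t})\ge b\}\le\exp\{-(b-N/2)/4\}$ in the regime of large $\gamma$ (hence large $b$ relative to $N$). This is precisely where the centering $N/2$ and the exponential rate of the threshold enter.

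Next I would translate the threshold condition. Writing $\alpha=1-(1-1/\gamma)^{1/w_\gamma}$, the hypothesis $b\ge N/2-4\log\alpha$ is equivalent to $\exp\{-(b-N/2)/4\}\le\alpha$, so each single-configuration exceedance has probability at most $\alpha$. Writing $S_t=\max_{t-w_\gamma\le k\le t-1}\sum_{n=1}^N g(U_{n,k,t})$ for the window-limited statistic, so that $\widetilde{T}_2=\inf\{t:S_t\ge b\}$, I would then assemble the run-length bound by exploiting that $U_{n,k,t}$ depends only on the observations $y_{n,k+1},\dots,y_{n,t}$ lying in a window of length at most $w_\gamma$: at any time $t$ only the $\le w_\gamma$ window-starts $k\in[t-w_\gamma,t-1]$ are active, and statistics built from disjoint blocks of observations are independent. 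Combining the per-configuration bound over the active window-starts and across time through this finite-memory structure gives a geometric-type lower bound on the survival probability $\mathbb{P}_\infty\{\widetilde{T}_2>m\}$, in which the survival over each window of $w_\gamma$ steps is made to compound to at least $1-1/\gamma$ (the identity $(1-\alpha)^{w_\gamma}=1-1/\gamma$); summing $\mathbb{E}_\infty\{\widetilde{T}_2\}=\sum_{m\ge 0}\mathbb{P}_\infty\{\widetilde{T}_2>m\}$ then yields $\mathbb{E}_\infty\{\widetilde{T}_2\}\ge\gamma$, i.e.\ $\widetilde{T}_2(b)\in C(\gamma)$.

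The main obstacle is the assembly step: because the windows used at consecutive times overlap in $w_\gamma-1$ observations, the events $\{S_t\ge b\}$ are strongly dependent, so the passage from the per-configuration bound to the exact factor $(1-1/\gamma)^{1/w_\gamma}$ requires careful bookkeeping---grouping time into blocks on which the statistics are independent and controlling the residual overlap---rather than a naive union bound, which would only give an ARL lower bound correct up to a constant factor. The chi-square tail estimate with the explicit constants $N/2$ and $4$ is the secondary technical point, and it is what substitutes for the martingale inequality available in the $T_{\rm CS}$ case.
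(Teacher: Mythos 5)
Your single-configuration analysis coincides with the paper's: you dominate the mixture statistic by its $p_0=1$ version via $g(x)\le x^2/2$, reduce the exceedance event at a fixed $(k,t)$ to a $\chi^2_N$ tail, invoke the same sub-exponential bound $\mathbb{P}_\infty\{\sum_{n=1}^N g(U_{n,k,t})\ge b\}\le \exp\{-(b-N/2)/4\}$ (the paper writes it as $\mathbb{P}_\infty\{Y\ge 2b\}\le \exp\{-(2b-N)/8\}$ for $Y\sim\chi^2_N$, which is identical), and your algebra translating $b\ge N/2-4\log\alpha$ with $\alpha=1-(1-1/\gamma)^{1/w_\gamma}$ into a per-configuration exceedance probability at most $\alpha$ is exactly the paper's.

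The gap is the assembly step, which you correctly flag as the main obstacle but then point in the wrong direction. The paper does not group time into independent blocks or control overlaps; it bounds the joint survival probability from below by the \emph{product of marginals} over all (at most) $w_\gamma t_0$ active configurations,
\begin{equation*}
\mathbb{P}_\infty\bigl\{\widetilde{T}_2(b)>t_0\bigr\}\;\ge\;\bigl[\mathbb{P}_\infty\{Y<2b\}\bigr]^{w_\gamma t_0},
\end{equation*}
and then sums the geometric series $\mathbb{E}_\infty\{\widetilde{T}_2(b)\}=\sum_{t_0\ge 0}\mathbb{P}_\infty\{\widetilde{T}_2(b)>t_0\}\ge \bigl(1-[\mathbb{P}_\infty\{Y<2b\}]^{w_\gamma}\bigr)^{-1}\ge\gamma$. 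In other words, the dependence you worry about is used in the \emph{favorable} direction: the no-alarm events $\{\sum_n U_{n,k,t}^2<2b\}$ are symmetric convex sets in the underlying Gaussian observations, and the product lower bound is a positive-association (Gaussian correlation--type) statement, not an independence statement. Your proposed substitutes do not deliver the lemma as stated: by your own admission the union bound loses a constant factor (so it misses the exact threshold $N/2-4\log[1-(1-1/\gamma)^{1/w_\gamma}]$), and blocking cannot produce genuinely independent pieces, since the statistics at times in one block of length $w_\gamma$ reach $w_\gamma$ observations back into the previous block, so at best you recover the bound with a degraded exponent. Thus the skeleton of your argument is the paper's, but the single inequality that yields the exact compounding factor $(1-1/\gamma)^{1/w_\gamma}$ --- the product-of-marginals lower bound on survival --- is missing, and without it the proof does not close. (In fairness, the paper asserts this product inequality without justification; making it rigorous, e.g.\ via a Gaussian correlation inequality for symmetric convex events, is precisely the content of the step you left open.)
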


\begin{remark}[Implication on window length.]
Lemma \ref{ARL2FA_multi_GCUSUM} shows that to have $b = O(\log \gamma)$, we need $\log w_{\gamma} = o(\log \gamma)$.
\end{remark}

\begin{theorem}[Asymptotical optimality of $T_1$, $\widetilde{T}_1$ and $\widetilde{T}_2$.]
Consider the multi-sensor slope change detection problem (\ref{P1}).   
\begin{itemize}
\item[(i)] If $b\geq \log \gamma$ and $b=\mathcal{O}(\log \gamma)$, then $T_1(b)$ is asymptotically minimax in class $C(\gamma)$ in the sense of minimizing expected moments $\mbox{ESM}_m^{\mathcal{A}}(T)$ and $\mbox{SM}_m^{\mathcal{A}}(T)$ for all $m\geq 1$ to the first order as $\gamma \xrightarrow[]{} \infty$.

\item[(ii)] In addition to conditions in (i), if the window length satisfies
\begin{equation}
\liminf_{\gamma \rightarrow \infty} \frac{w_{\gamma}}{\left[6 (\log \gamma)/ \Delta^2\right]^{1/3}} >1,
\label{window_multi_sensor}
\end{equation}
then $\widetilde{T}_1(b)$  is asymptotically minimax in class $C(\gamma)$ in the sense of minimizing expected moments $\mbox{ESM}_m^{\mathcal{A}}(T)$ and $\mbox{SM}_m^{\mathcal{A}}(T)$ for all $m\geq 1$ to first order as $\gamma \xrightarrow[]{} \infty$.

\item[(iii)] If $b \geq N/2-4\log [ 1-\left(1-1/\gamma\right)^{1/w_{\gamma}} ]$,  $b=\mathcal{O}(\log \gamma)$, the window length satisfies $\log(w_{\gamma}) = o(\log \gamma)$ and (\ref{window_multi_sensor}) holds,
then $\widetilde{T}_2(b)$ is asymptotically minimax in class $C(\gamma)$ in the sense of minimizing $\mbox{ESM}_m^{\mathcal{A}}(T)$ and $\mbox{SM}_m^{\mathcal{A}}(T)$ for $m = 1$ to first order as $\gamma \xrightarrow[]{} \infty$.
\end{itemize}
\label{optimality_here}
\end{theorem}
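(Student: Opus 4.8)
The plan is to treat the three assertions separately: parts (i) and (ii) reduce directly to the general optimality results already established, whereas part (iii) must be handled by sandwiching an information lower bound against the EDD upper bound, since the GLR statistic is not a genuine likelihood ratio and the martingale argument behind Theorem~\ref{optimality_mixture_CUSUM} is therefore unavailable.

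For part (i), I would observe that $T_1$ is exactly the general mixture CUSUM $T_{\rm CS}$ of (\ref{mixtureCUSUM}) with the per-sensor log-likelihood ratio taken to be $\lambda_{n,k,t}=\ell_n(k,t,\delta_n)$ from (\ref{likelihood}), specialized to the alternative whose true slopes equal the nominal values $\delta_n$. It then suffices to verify the hypotheses of Theorem~\ref{optimality_mixture_CUSUM}, which is precisely the content of Lemma~\ref{checkassuption}: because the weight $(i-k)$ in (\ref{likelihood}) makes the mean part of $\ell_n(k,k+j,c_n)$ grow like $\tfrac{c_n^2}{2\sigma_n^2}A_j$ with $A_j\sim j^3/3$, one gets $j^{-3}\lambda_{\mathcal{A},k,k+j}\to\Delta^2/6$ in probability under $\mathbb{P}_k^{\mathcal{A}}$, giving $q=3$ and $I_{\mathcal{A}}=\Delta^2/6$, and the one-sided concentration requirements (\ref{secondassumption_multi}) and (\ref{assumption_optimality_mixture_CUSUM}) follow from Gaussian tail bounds on the fluctuation term. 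Lemma~\ref{ARL2FA_mixtureCUSUM} places $T_1(b)\in C(\gamma)$ for $b\ge\log\gamma$, and with $b=\mathcal{O}(\log\gamma)$ Theorem~\ref{optimality_mixture_CUSUM} delivers minimaxity for all $m\ge1$. Part (ii) is then immediate: $\widetilde{T}_1$ is the window-limited $\widetilde{T}_{\rm CS}$, so Corollary~\ref{cor_TCS} applies once its window hypothesis (\ref{window_mixture_assumption}) is specialized to $q=3$, $I_{\mathcal{A}}=\Delta^2/6$, which is exactly the stated condition (\ref{window_multi_sensor}); no new estimate is needed beyond noting that the window must asymptotically exceed the first-order delay $[6(\log\gamma)/\Delta^2]^{1/3}$.

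For part (iii) I would build a two-sided bound on the delay. The lower bound is supplied by Theorem~\ref{lowerbound_multi}(ii): since Lemma~\ref{checkassuption} verifies its hypotheses with $q=3$ and $I_{\mathcal{A}}=\Delta^2/6$, every $T\in C(\gamma)$ obeys $\mbox{ESM}_1^{\mathcal{A}}(T)\ge\mbox{SM}_1^{\mathcal{A}}(T)\ge(1+o(1))\,[6(\log\gamma)/\Delta^2]^{1/3}$. For the matching upper bound I would invoke Theorem~\ref{thm:EDD}, which is applicable because (\ref{window_multi_sensor}) forces $w_\gamma>(6b/\Delta^2)^{1/3}$ once $b=\mathcal{O}(\log\gamma)$, to obtain $\mathbb{E}_0^{\mathcal{A}}\{\widetilde{T}_2\}\le[6b/\Delta^2]^{1/3}+o(1)$; using the remark below (\ref{detectiondelays2}) that for $m=1$ the worst change-point is $k=0$, this EDD equals $\mbox{ESM}_1^{\mathcal{A}}(\widetilde{T}_2)$ and dominates $\mbox{SM}_1^{\mathcal{A}}(\widetilde{T}_2)$. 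Lemma~\ref{ARL2FA_multi_GCUSUM} certifies $\widetilde{T}_2(b)\in C(\gamma)$ under the stated threshold, and the resulting sandwich $[6(\log\gamma)/\Delta^2]^{1/3}\le\mbox{SM}_1^{\mathcal{A}}\le\mbox{ESM}_1^{\mathcal{A}}\le[6b/\Delta^2]^{1/3}(1+o(1))$ collapses to first-order optimality once $b=(1+o(1))\log\gamma$.

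The hard part will be this last collapse of the constant. Driving the admissible threshold down to $(1+o(1))\log\gamma$ while preserving $\mathbb{E}_\infty\{\widetilde{T}_2\}\ge\gamma$ is delicate precisely because $U_{n,k,t}^2/2$ is not a genuine log-likelihood ratio: one loses the martingale identity that makes the CUSUM threshold bound of Lemma~\ref{ARL2FA_mixtureCUSUM} sharp, and must instead lean on the cruder guarantee of Lemma~\ref{ARL2FA_multi_GCUSUM}, so the bulk of the work is checking that the threshold it forces, together with the hypotheses $b=\mathcal{O}(\log\gamma)$ and $\log w_\gamma=o(\log\gamma)$, keeps $[6b/\Delta^2]^{1/3}$ asymptotically equal to the lower bound $[6(\log\gamma)/\Delta^2]^{1/3}$. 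The same limitation explains why (iii) is confined to $m=1$: Theorem~\ref{thm:EDD} controls only the first moment of the detection delay, so there is no matching upper bound on $\mbox{ESM}_m^{\mathcal{A}}$ or $\mbox{SM}_m^{\mathcal{A}}$ for $m>1$, and the clean reduction used in (i)--(ii) cannot be repeated here.
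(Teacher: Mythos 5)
Your proposal follows essentially the same route as the paper's own proof: parts (i) and (ii) reduce to Theorem \ref{optimality_mixture_CUSUM} and Corollary \ref{optimality_window_mixture_CUSUM} via Lemma \ref{checkassuption} (with $q=3$, $I_{\mathcal{A}}=\Delta^2/6$), and part (iii) is the same sandwich between the lower bound of Theorem \ref{lowerbound_multi} and the EDD upper bound (\ref{firstorderEDD}), with Lemma \ref{ARL2FA_multi_GCUSUM} certifying membership in $C(\gamma)$ and $\log w_\gamma = o(\log\gamma)$ reconciling the threshold with $b=\mathcal{O}(\log\gamma)$. The ``hard part'' you rightly flag in (iii) --- that the threshold forced by Lemma \ref{ARL2FA_multi_GCUSUM} is of order $4\log\gamma$ rather than $(1+o(1))\log\gamma$, so the upper bound $[6b/\Delta^2]^{1/3}$ need not collapse onto the lower bound $[6(\log\gamma)/\Delta^2]^{1/3}$ with matching constant --- is equally unaddressed in the paper, whose proof simply declares (iii) an immediate consequence of Lemma \ref{ARL2FA_multi_GCUSUM} and (\ref{firstorderEDD}).
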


\begin{remark}
Above we prove the optimality of $T_1(b)$ and $\widetilde{T}_1(b)$  for $m\geq 1$. However, we can only prove the optimality of $\widetilde{T}_2(b)$ for a special case $m = 1$, due to a lack of martingale properties here.
\end{remark}

\section{Numerical Examples}\label{sec:num_eg}

\noindent{\bf Comparison with mean-shift GLR procedures.} We compare the mixture procedure for slope change detection, with the classic multivariate CUSUM \cite{multivariate_CUSUM} and the mixture procedure for mean shift detection \cite{xie2013sequential}. The multivariate CUSUM essentially forms a CUSUM statistic at each sensor, and raises an alarm whenever a single sensor statistic hits the threshold. As commented earlier in Remark \ref{remark_mean_shift}, the only difference between $\widetilde{T}_2$ and the mixture procedure for mean shift in \cite{xie2013sequential} is how $U_{n, k, t}$ is defined. Following the steps for deriving (\ref{firstorderEDD}), we can show that the mean shift mixture procedure is also asymptotically optimal for the slope change detection problem. Here, our numerical example verifies this, and show that the improvement of EDD by using $\widetilde{T}_2$ versus the multi-variate CUSUM and the mean-shift mixture procedure is not significant. However, the mean-shift mixture procedure fails to estimate the change-point time accurately due to model mismatch. Fig. \ref{fig:compare_MSGLR_SLGLR} shows the mean square error for estimating the change-point time $\kappa$, using the multi-chart CUSUM, the mean-shift mixture procedure, and $\widetilde{T}_2$, respectively. Note that $\widetilde{T}_2$ has a significant improvement.

\begin{figure}[h]
\begin{center}
\includegraphics[width = 0.5\linewidth]{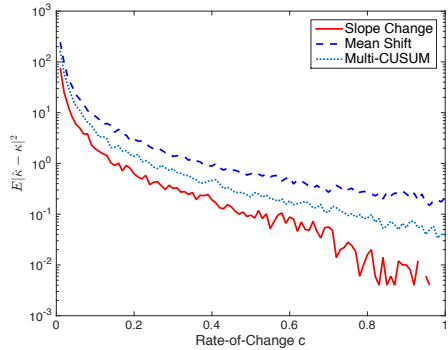}
\caption{Comparison of mean square error for estimating the change-point time for the mixture procedure tailored to slope change $\widetilde{T}_2$, the mixture procedure with mean shift, and multi-chart CUSUM, when $N=100$, $p_0=0.3$, $p=0.5$ and $w=200$.}
\label{fig:compare_MSGLR_SLGLR}
\end{center}
\vspace{-.1in}
\end{figure}

\vspace{.1in}
\noindent
{\bf Financial time series.}
In the earlier example illustrated in Fig. \ref{fig:finance}(a), the goal is to detect a trend change online. Clearly a change-point occurs at time 8000 in the stock price, and such a change-point is verifiable. Fig. \ref{fig:finance}(b) shows that there is a peak in the bid size versus the ask size, which usually indicates a change in the trend of the price (possible with some delay). To illustrate the performance of our method in this financial dataset, we plot the detection statistics by using a ``single-sensor'', i.e., using only one data stream, and by using ``multi-sensor'' scheme, i.e. using data from multiple streams, which in this case correspond to $8$ factors (e.g, stock price, total volume, bid size and bid price, as well ask size and ask price). In fact, only 4 factors out of 8 factors contain the change-point. Fig. \ref{fig:finance}(c) plots the statistic if we use only a single-sensor. Fig. \ref{fig:finance}(d)  illustrates the statistic when we use all the 8 factors and preprocess by whitening with the covariance of the factors as described in Section \ref{sec:cor_sensor}. The statistics all rise around time 8000 with the multi-sensor statistic to be smoother and indicates a lower false detection rate. Looking at Fig. \ref{fig:finance}(d), after the major trend change (around sample index 8000), the multi-chart CUSUM statistic rises the slowest. Although it appears, the slope-change mixture procedure rises a bit slower than the mean-shift mixture procedure, we demonstrate in simulation that for fixed ARL these two procedures have similar EDD¡¯s, and also in Fig. 5 that the slope-change mixture procedure has a better performance in estimating $k^*$ than the mean-shift mixture procedure. Therefore, the slope-change mixture procedure is still preferrable.


\begin{figure}[h!]
\begin{center}
\begin{tabular} {cc}
\includegraphics[width=4cm,height=3cm]{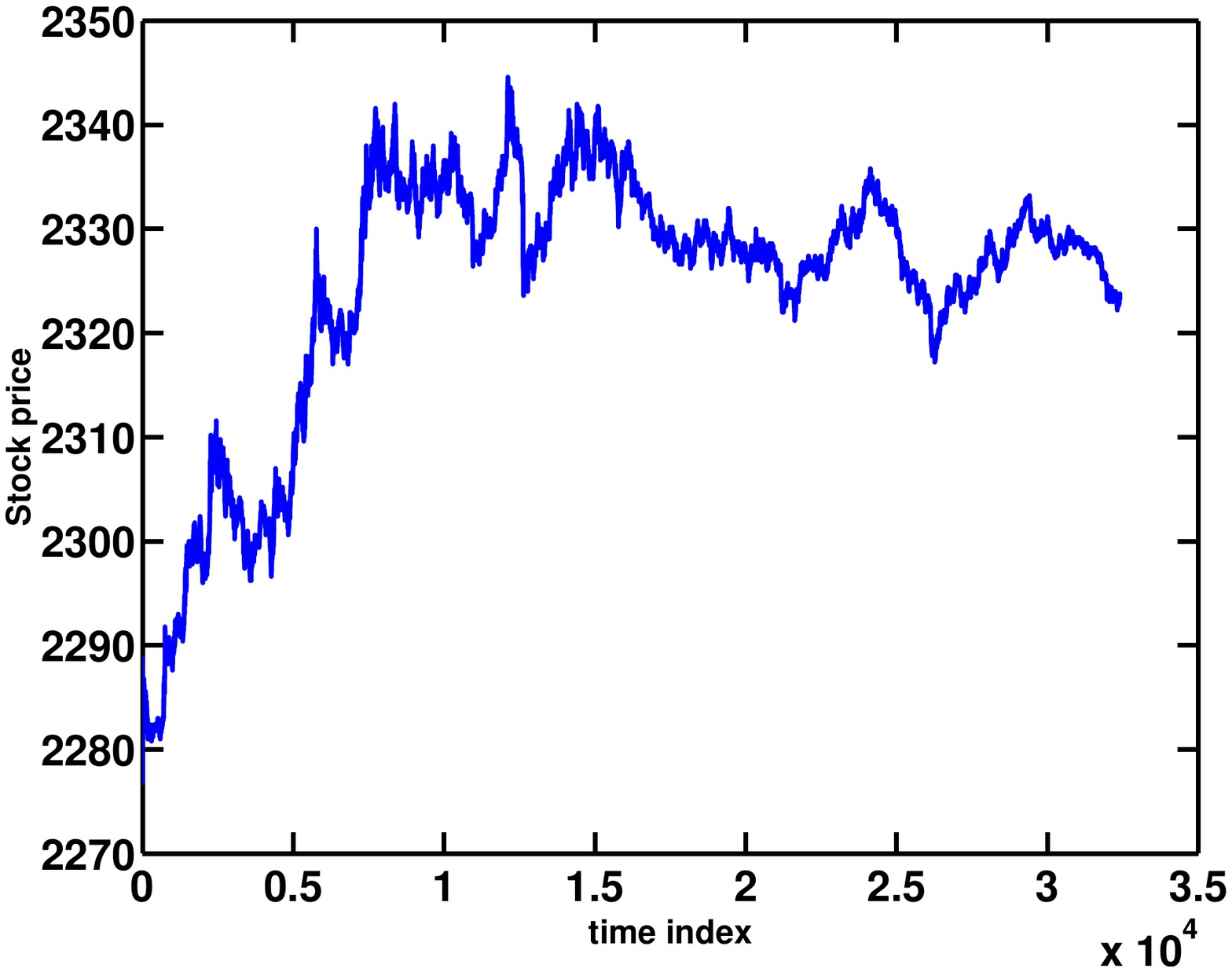} &
\includegraphics[width=4cm,height=3cm]{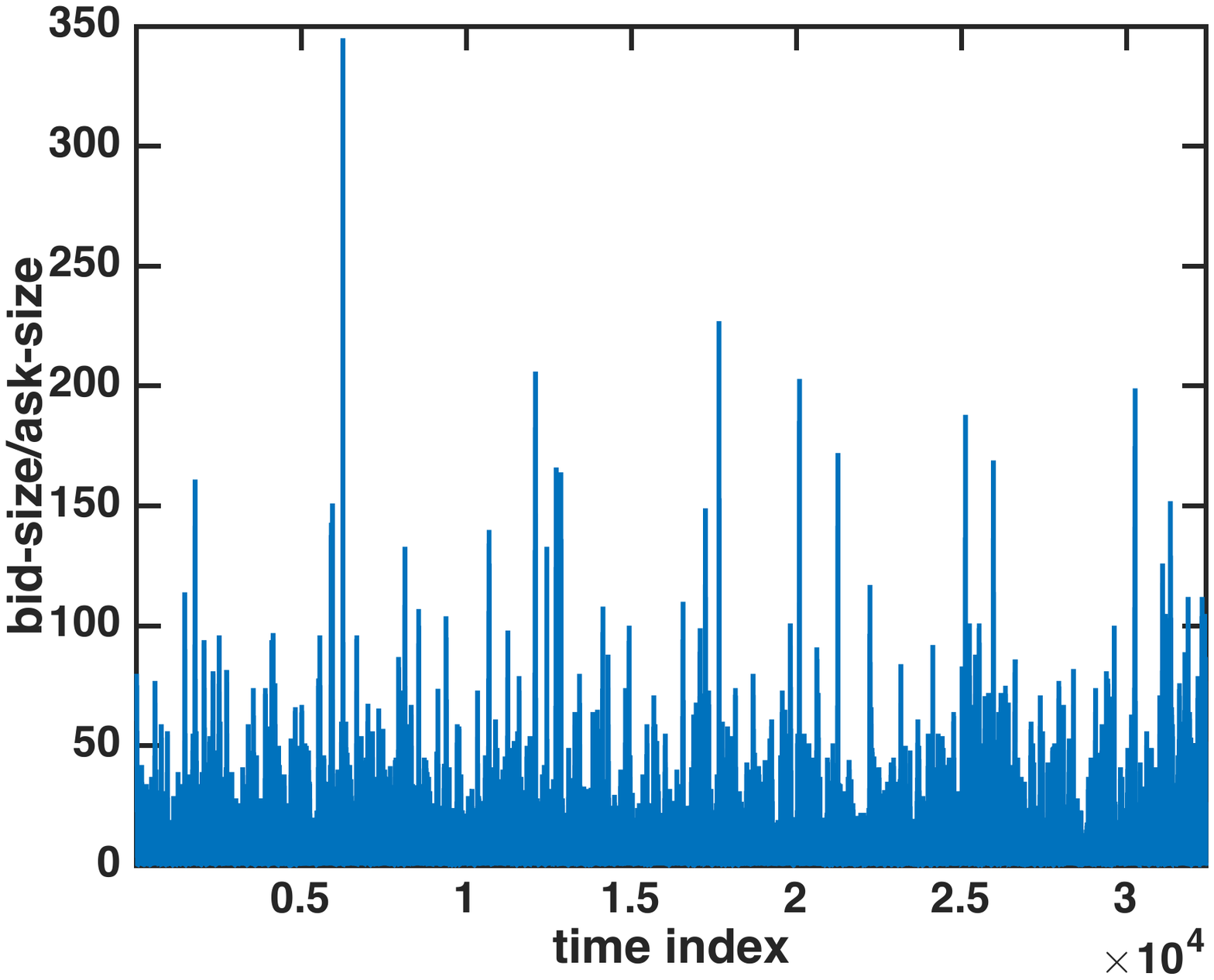} \\
(a) Stock price data. & (b) ask-size/bid-size \\
 \includegraphics[width=4cm,height=3cm]{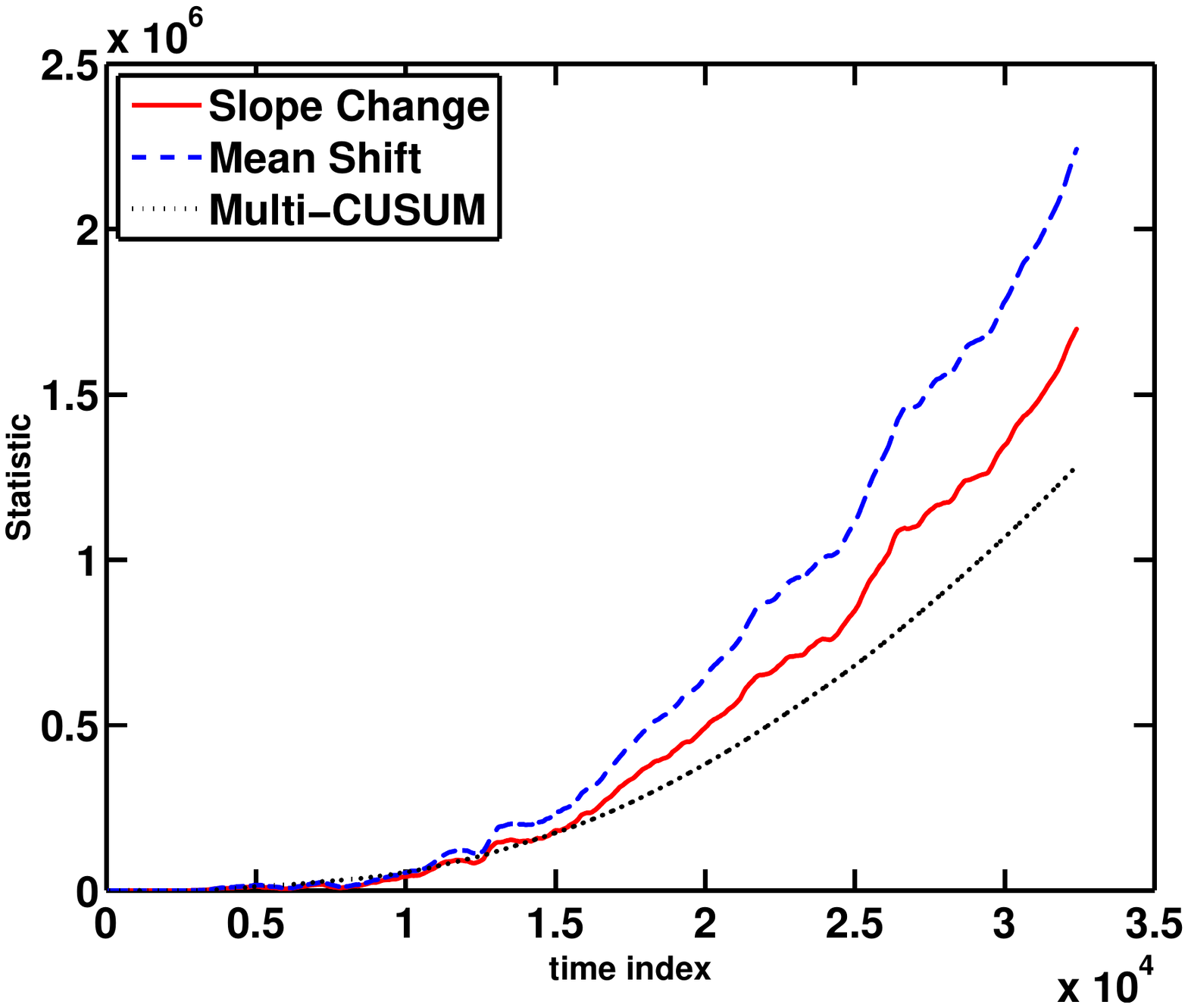} &
 \includegraphics[width=4cm,height=3cm]{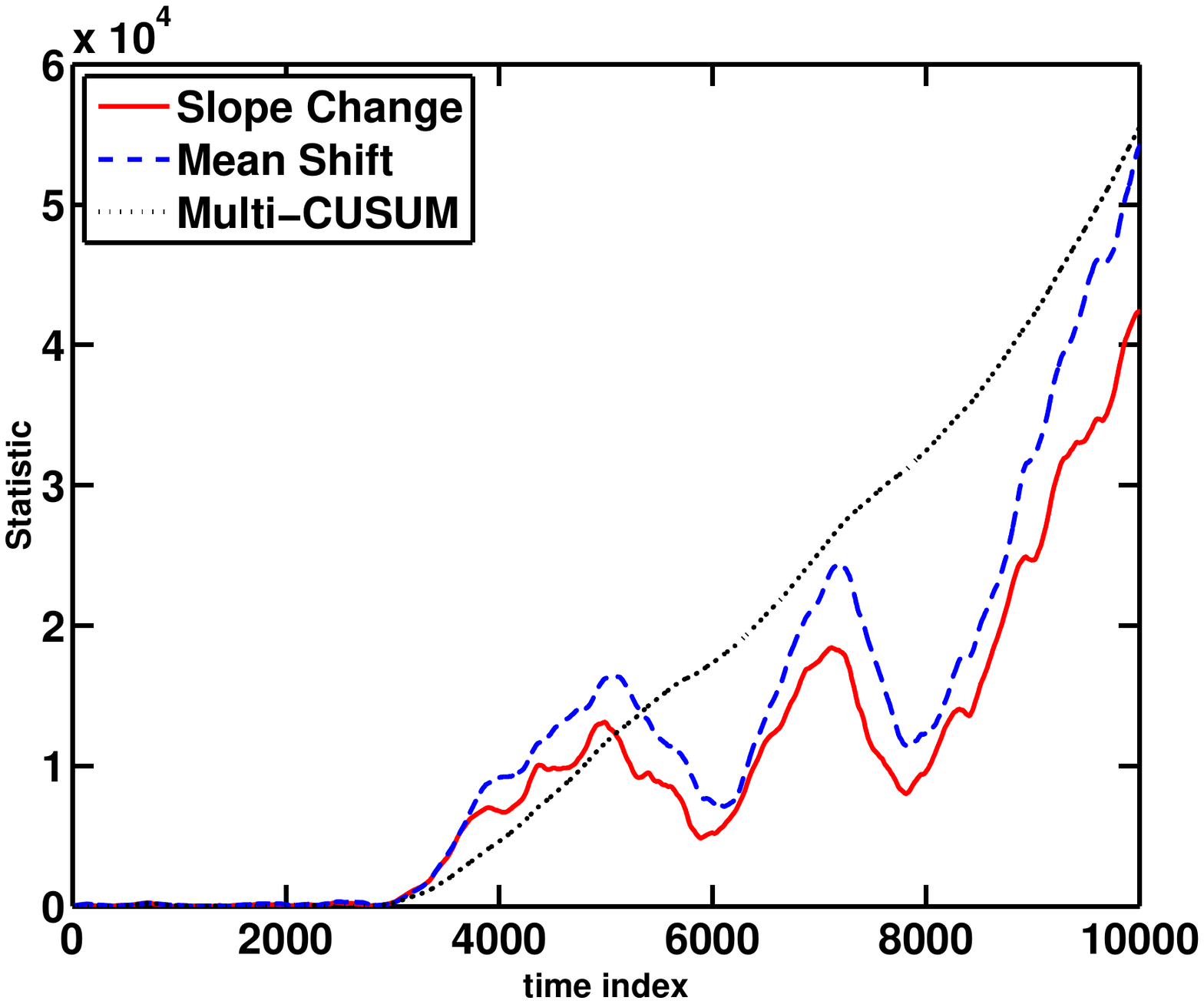}\\
(c) Single-sensor. & (d) Zoom-in of (c). \\
 \includegraphics[width=4cm,height=3cm]{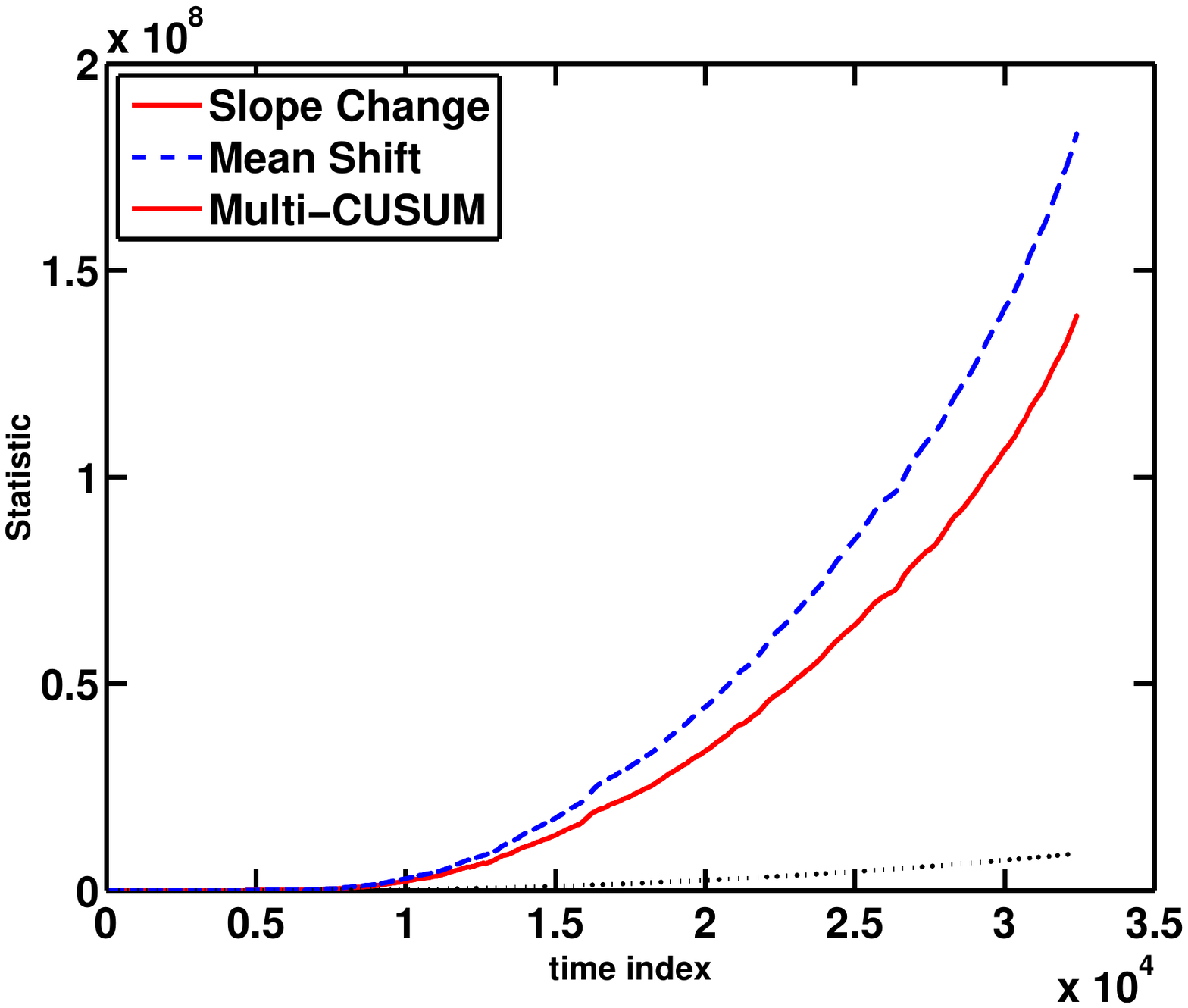}  &  \includegraphics[width=4cm,height=3cm]{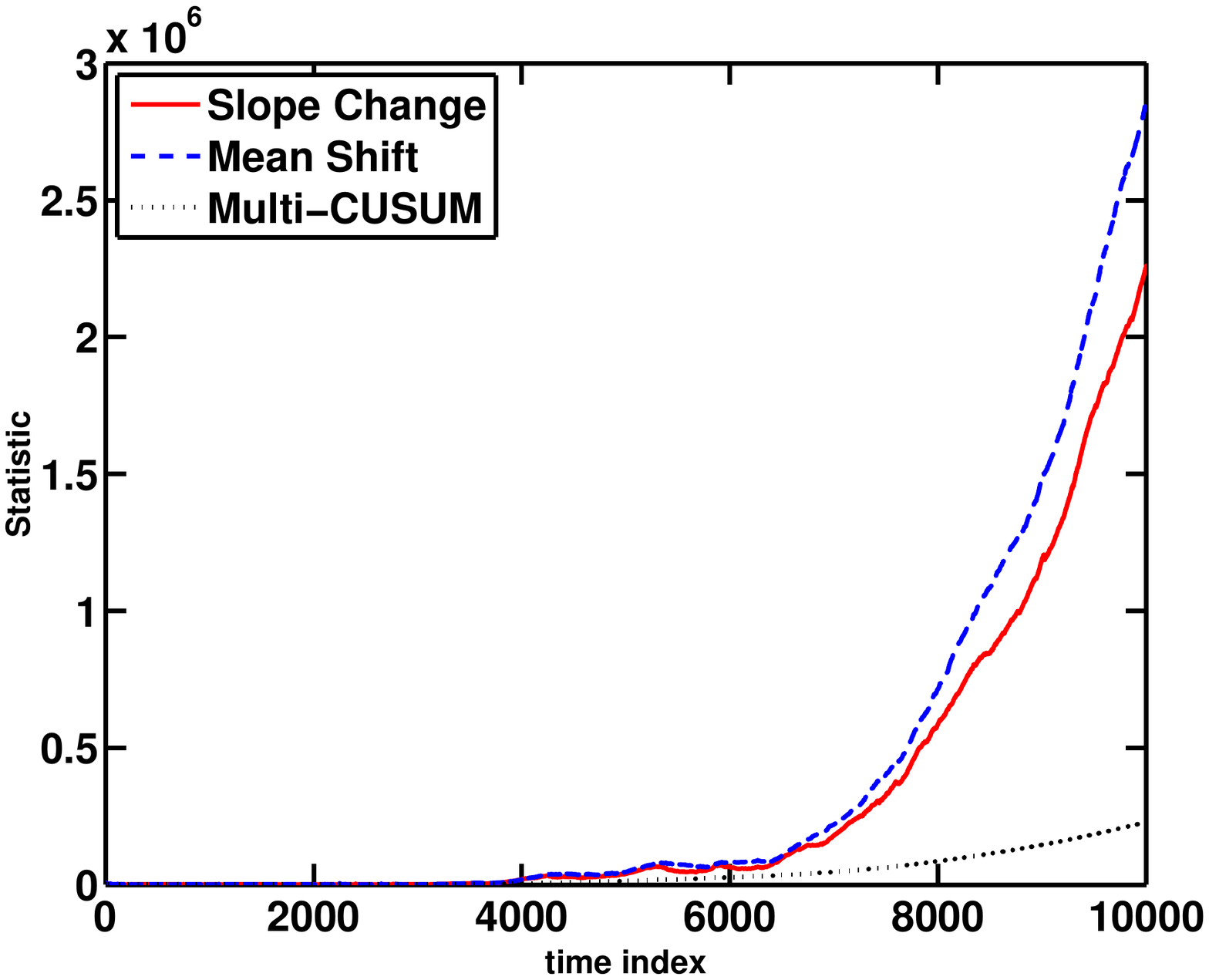}\\
(e) Multi-sensor, correlation. & (f) Zoom-in of (e).
\end{tabular}
\end{center}
\caption{Statistic for detecting trend changes in financial time series with $w=500$ for both single sensor and multi-sensor procedures, and $p_0=1$ for the multi-sensor procedure.}
\label{fig:finance}
\end{figure}

\vspace{.1in}
\noindent{\bf Aircraft engine multi-sensor prognostic.}
We present an engine prognostic example using the aircraft turbofan engine dataset simulated by NASA\footnote{Data can be downloaded from http://ti.arc.nasa.gov/tech/dash/pcoe/prognostic-data-repository/}. In the dataset, multiple sensors measure different physical properties of the aircraft engine to detect a faulty condition and to predict the whole life time. The dataset contains $100$ training systems and $100$ testing systems. Each system is monitored by $N = 21$ sensors. In the training dataset, we have a complete sample path from the initial time to failure for each of the $21$ sensors of each training system. In the testing dataset, we only have partial sample paths (i.e., the system fails eventually but we have not observed that yet and it still has a remaining life). Our goal is to predict the whole life for the test systems using available observations. The dataset also provides ground truth, i.e., the actual failure times (or equivalently the whole life) of the testing systems.

We first apply our mixture procedures to each training system $j$, $j = 1, \ldots, 100$, to estimate a change-point location $\kappa_j$ (which corresponds to the maximizer of $k$ in the definition of $\widetilde{T}_2$ when the procedure stops),  and the rate-of-change at $n$th sensor for  the $j$th system $\hat{c}_{n, j}$ using (\ref{estimation_of_slope}). Then fit a simple survival model using $\hat{\kappa}_j$ and $\hat{c}_{n, j}$ as regressors in determining the remaining life. We build a model for the Time-To-Failure (TTF) $Y_j$ of system $j$ based a log location-normal model, which is commonly used in reliability theory \cite{fang2015adaptive}:
$
    \mathbb{P}\{Y_j \leq y\} = \Phi\left[(\log(y)-\pi_j)/\eta\right],
$
where $\eta$ is a user specified scale parameter that is assumed to be the same for each system, $\pi_j$ is the location parameter that is assumed to be a linear function of the rate-of-change:
$
\pi_j = \beta_0 + \sum_{n=1}^N \beta_{j}\hat{c}_{n,j},
$
where $(\beta_0, \beta_1, \ldots, \beta_N)$ is a vector of the regression coefficients that are estimated by maximum likelihood.
Next, we apply the mixture procedure on the $j$th testing system to estimate the change-point time $\hat{\kappa}_j$ and the rate-of-change $\hat{c}_{n, j}$, and substitute them into the fitted models to determine a TTF using the mean value. The whole life of the $j$th system is estimated as $\hat{\kappa}_j$ plus its mean TTF.

We use the relative prediction error as performance metric, which is the absolute difference between the estimated life and the actual whole life, divided by the actual whole life. Fig. \ref{fig:system1} shows the box-plot of the relative prediction error versus threshold $b$. Our method based on change-point detection works well and it has a mean relative prediction error  around 10\%. Here the choice of the threshold $b$ has a tradeoff: the relative prediction error decreases with a larger $b$; however, a larger $b$ also causes a longer detection delay.

\begin{figure}[h]
\begin{center}
\begin{tabular}{cc}
\includegraphics[width = .5\linewidth]{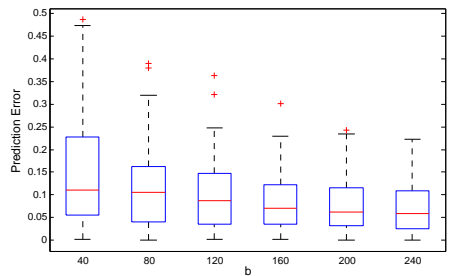}
\end{tabular}
\caption{Aircraft engine prognostic example: box-plot for relative prediction error of the estimated life time of the engine versus threshold $b$. }
\label{fig:system1}
\vspace{-0.1in}
\end{center}
\end{figure}

\section{Discussion: adaptive choice of $p_0$}\label{sec:con}

The mixture procedure assumes that a fraction $p_0$ of the sensors are affected by the change. In practice, $p_0$ can be different from $p$ which is the actual fraction of sensors affected. The performance of the procedure is fairly robust to the choice of $p_0$. Fig. \ref{robustDD} compares the simulated EDD of a mixture procedure with a fixed $p_0$ value, versus a mixture procedure when setting $p_0 = p$ if we know the true fraction of affected sensors. Again, thresholds are chosen such that ARL for all cases are 5000. Note that the detection delay is the smallest if $p_0$ matches $p$; however,  EDD in these two settings are fairly close when $p_0\neq p$.

\begin{figure}[h]
\begin{center}
\includegraphics[width = 0.45\linewidth]{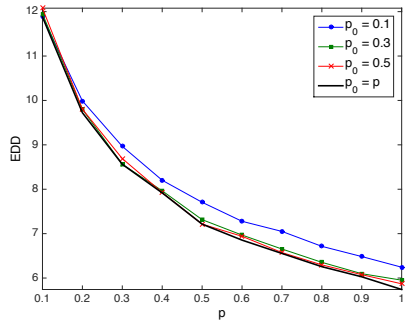}
\caption{Simulated EDD for a mixture procedure with $p_0$ set to a fixed value, versus a mixture procedure with $p_0 = p$ equal to the true fraction of affected sensors, when $c_n = 0.1$, $N=100$ and $w=200$. }
\label{robustDD}
\end{center}
\end{figure}

Still, we may improve the performance of the mixture procedure by adapting the parameter $p_0$ using a method based on empirical Bayes. Assume each sensor is affected with probability $p_0$, but now $p_0$ itself is a random variable with Beta distribution $\mbox{Beta}(\alpha, \beta)$. This also allows the probability of being affected to be different at each sensor. With sequential data, we may update by computing a posterior distribution of $p_0$ using data in the following way.
Choosing a constant $a$, we believe that the $n$th sensor is likely to be affected by the change-point if $U_{n,k,t}$ is larger than $a$. Let $\mathbb{I}\{\cdot\}$ denote an indicator function. For each $t$, assume
$s_{n, t} = \mathbb{I}\left\{\max_{t-w\leq k <t} U_{n,k,t}>a\right\}$ is a Bernoulli random variable with parameter $p_n$. Due to conjugacy, the posterior of $p_0$ at the $n$th sensor, given $s_{n, t}$ up to time $t$, is also a Beta distribution with parameters $\mbox{Beta}(s_{n, t}+\alpha,1-s_{n, t}+\beta)$. An adaptive mixture procedure can be formed using the posterior mean of $p_0$, which is given by $\rho_n \triangleq (s_{n, t}+\alpha)/(\alpha +\beta + 1)$:
\begin{equation}
T_{\rm adaptive}=\inf \left\{ t : \max_{t-w\leq k <t} \sum_{n=1}^N \log(1-\rho_n+\rho_n\exp(U_{n, k, t}^2/2))  \geq b \right\},
\label{adaptivestatistic}
\end{equation}
where $b$ is a prescribed threshold.

We compare the performance of $\widetilde{T}_{\rm adaptive}$ with its non-adaptive counterpart $\widetilde{T}_2$ by numerical simulations. Assume $N = 100$ and there are 10 sensors affected from the initial time with a rate-of-change $c_n = c$. The parameters for $\widetilde{T}_{\rm adaptive}$ are $\alpha=1,\beta=1$ and $a=2$.
Again, the thresholds are set so that the simulated ARL for both procedures are 5000. Table \ref{comparsionofDD1}  shows that  $\widetilde{T}_{\rm adaptive}$ has a much smaller EDD than $\widetilde{T}_2$ when signal is weak with a relative improvement around $20\%$.
However, it is more difficult to analyze ARL of the adaptive method theoretically.

%

\begin{table}[h]
\begin{center}
\caption{Comparing EDD of $\widetilde{T}_2$ and $\widetilde{T}_{\rm adaptive}$.}
  \begin{tabular}{ |c | c | c | c | c | c|}
    \hline
    Rate-of-change & 0.01 & 0.03 & 0.05 & 0.07 & 0.09 \\ \hline
    Non-Adaptive $\widetilde{T}_2$ & 54.15 & 26.24 & 18.75 & 14.98 & 12.74 \\ \hline
    Adaptive $\widetilde{T}_{\rm adaptive}$ & 38.56 & 20.28 & 14.42 & 12.17 & 10.13 \\
    \hline
  \end{tabular}
\end{center}
\label{comparsionofDD1}
\end{table}

%

\section*{Acknowledgement}
Authors would like to thank Professor Shi-Jie Deng at Georgia Tech for providing the financial time series data. This work is partially supported by NSF grant CCF-1442635 and CMMI-1538746.

\bibliographystyle{spmpsci}
\bibliography{slope_letter,AuthorResponse}

\appendix

\section{An informal derivation of Theorem \ref{maintheorem}: ARL}

We first obtain an approximation to the probability that the stopping time is greater than some big constant $m$. Such an approximation is obtained using a general method for computing first passing probabilities first introduced in \cite{yakir2013extremes} and developed in \cite{siegmund2011detecting}. The method relies on measure transformations that shift the distribution of each sensor over a window that contains the hypothesized post-change samples. More technical details to make the proofs more rigorous are omitted. These details have been described and proved in \cite{siegmund2011detecting}.

In the following, let $\tau = t-k$. Define the log moment-generating-function $\psi_{\tau}(\theta) = \log \mathbb{E} \exp\{\theta g(U_{n, k, t})\}$. Recall that $U_{n, k, t}$ is a generic standardized sum over all observations within a window of size $\tau$ in one sensor, and the parameter $\theta = \theta_\tau$ is selected by solving the equation
\[\dot{\psi}_\tau(\theta) = b/N.\] Since $U_{n, k, t}$ is a standardized weighted sum of $\tau$ independent random variables, $\psi_\tau$ converges to a limit as $\tau \rightarrow \infty$, and $\theta_\tau$ converges to a limiting value. We denote this limiting value by $\theta$.

Denote the density function under the null as $\mathbb{P}$. The transformed distribution for all sequences at a fixed current time $t$ and at a hypothesized change-point time $k$ (and hence there are $\tau$ hypothesized post-change samples) is denoted by $\mathbb{P}_t^k$ and is defined via
\[
d\mathbb{P}_t^k = \exp\left[\theta_\tau \sum_{n=1}^N g(U_{n, k, t}) - N\psi_\tau(\theta_\tau)\right] d \mathbb{P}.
\]
Let
\[
\ell_{N, k, t} = \log (d \mathbb{P}_t^k/d\mathbb{P}) = \theta_\tau \sum_{n=1}^N g(U_{n, k, t}) - N\psi_\tau(\theta_\tau).
\]
Let the region
\[
D = \{(t, k): 0<t<t_0, 1\leq t-k \leq w\}
\]
be the set of all possible change-point times and time up to a horizon $m$. Let
\[
A = \left\{\max_{(t, k) \in D} \sum_{n=1}^N g(U_{n, k, t}) \geq b\right\}.
\]
be the event of interest. Hence, we have
\begin{equation}
\begin{split}
\mathbb{P}\{A\}  & = \sum_{(t, k) \in D} \mathbb{E}
\left\{
\frac{e^{\ell_{N, k, t}} }{\sum_{(t', k')\in D} e^{\ell_{N,k',t'}}}; A
\right\} = \sum_{(t, k) \in D} \mathbb{E}_t^k\left\{\left(\sum_{(t', k')\in D} e^{\ell_{N, k', t'}}\right)^{-1}; A\right\}\\
 & =\sum_{(t, k) \in D} e^{-N(\theta_\tau b - \psi_\tau(\theta_\tau))}\times \underbrace{\mathbb{E}_t^k\left\{
 \frac{M_{N, k, t}}{S_{N, k, t}} e^{-\tilde{\ell}_{N, k, t} - \log M_{N, k, t}};  \tilde{\ell}_N + \log M_{N, k, t} \geq 0
 \right\}}_{I}
\end{split}
\end{equation}
where
\[
\tilde{\ell}_{N, k, t} = \sum_{n=1}^N \theta_\tau [g(U_{n, k, t}) - b],
\]
\[
S_{N, k, t}= \sum_{(t', k') \in D} e^{\sum_{n=1}^N \theta_\tau [g(U_{n, k', t'}) - g(U_{n, k, t})]}, \]\[
M_{N, k, t} = \max_{(t', k') \in D} e^{\sum_{n=1}^N \theta_\tau [g(U_{n, k', t'}) - g(U_{n, k, t})]}.
\]
As explained in \cite{siegmund2011detecting}, under certain verifiable assumptions, a ``localization lemma'' allows simplifying the quantities of the form in $I$ into a much simpler expression of the form
\[
\sigma_{N, \tau}^{-1}(2\pi)^{-1/2} \mathbb{E}\{M/S\},
\]
where $\sigma_{N,\tau}$ is the $\mathbb{P}_s^\tau$ standard deviation of $\tilde{\ell}_N$ and $\mathbb{E}[M/S]$ is the limit of $\mathbb{E}\{M_{N,k,t}/S_{N,k,t}\}$ as $N\rightarrow \infty$. This reduction relies on the fact that, for large $N$ and $m$, the ``local'' processes $M_{N,k,t}$ and $S_{N,k,t}$ are approximately independent of the ``global'' process $\tilde{\ell}_N$. This allows the expectation to be decomposed into the expectation of $M_N/S_N$ times the expectation involving $\tilde{\ell}_N +\log M_N$, treating $\log M_N$ as a constant.

Let $\tau' = t'-k'$, and denote by $z_{n, i} = (y_{n, i}-\mu_n)/\sigma_n$ which are i.i.d. normal random variables, $i=1,2,\ldots$. Note that, use Taylor expansion up to the first order, we obtain
\begin{equation}
\begin{split}
&\sum_{n=1}^N \theta_\tau [g(U_{n, k', t'}) - g(U_{n, k, t})]
 \approx  \sum_{n=1}^N \theta_\tau \dot{g}(U_{n, k, t})[U_{n, k', t'} - U_{n, k, t}] \\
  = & \sum_{n=1}^N \theta_\tau \dot{g}(U_{n, k, t})
  [A_{\tau'}^{-1/2}  W_{n,k',t'}  - A_\tau^{-1/2} W_{n, k', t'}
+ A_\tau^{-1/2} W_{n, k', t'} - A_\tau^{-1/2} W_{n, k, t} ]\\
=&  \sum_{n=1}^N \frac{\theta_\tau \dot{g}(U_{n, k, t})}{\sqrt{A_{\tau'}}}(\sum_{j=1}^{\tau'} j z_{n, t'-\tau'+j} - \sqrt{\frac{A_{\tau'}}{A_\tau}} \sum_{j=1}^{\tau'} j z_{n, t'-\tau' + j}) \\
&\qquad  + \sum_{n=1}^N \theta_\tau \dot{g}(U_{n, k, t}) 
A_\tau^{-1/2}\left(\sum_{i=1}^{\tau'} i z_{n, t'-\tau' + i}- \sum_{i=1}^\tau i z_{n, t - \tau + i}\right)
\end{split}
\end{equation}
Note that in the above expression, the first term has two weighted data sequences running backwards from $t'$ and when $\tau$ and $\tau'$ both tends to infinity they tend to cancel with each other. Hence, asymptotically we need to consider the second term.
Observe that one may let $t'-k'=\tau$ and $\theta = \lim_{\tau \rightarrow \infty} \theta_\tau$ for $\theta_\tau$ in the definition of the increments and still maintain the required level of accuracy.  When $\tau = u$ the first term in the above expression, and the second term consists of two terms that are highly correlated. The second term can be rewritten as
\begin{equation}
\begin{split}
A_\tau^{-1/2} \theta_\tau \left[ \sum_{n=1}^N  \dot{g}(U_{n, k, t})  W_{n, k', t'} - \sum_{n'=1}^N  \dot{g}(U_{n', k, t})W_{n', k, t}\right].
\end{split}
\end{equation}
Since all sensors are assumed to be independent (or has been whitened by a known covariance matrix so the transformed coordinates are independent), so the covariance between the two terms is given by
\begin{equation}
\mbox{Cov}\left(\sum_{n=1}^N  \dot{g}(U_{n, k, t})W_{n, k, t},
\sum_{n'=1}^N  \dot{g}(U_{n', k, t})W_{n', k, t}
\right)
= \sum_{n=1}^N  [\dot{g}(U_{n', k, t})]^2 \mbox{Cov}(W_{n, k, t}, W_{n', k', t'}).
\end{equation}

%
For each $n$, let $k<k'<t<t'$ and $t-k = t'-k'=\tau$, and define $u \triangleq k'-k$ and $s \triangleq t-k'$. We have
\begin{equation}
\begin{split}
A_\tau^{-1}\mbox{Cov}(W_{n,k,t}, W_{n,k',t'})
=& \mathbb{E}\left\{ \frac{ \left(\sum_{i=k+1}^{t} (i-k)z_{n, i}\right) \left(\sum_{i=k'+1}^{t'} (i-k')z_{ni}\right) }{\sum_{i=1}^{\tau} i^2}\right\} \\
=& \mathbb{E}\left\{ \frac{ \sum_{i=k'+1}^{t} (i-k)(i-k')z_{n, i}^2 } {\sum_{i=1}^{\tau} i^2} \right\}
= \frac{\sum_{i=1}^{s} i^2+u\sum_{i=1}^{s}i}{\sum_{i=1}^{\tau} i^2}.
\end{split} \nonumber
\end{equation}
By choosing $u = \sqrt{\tau}$, we know that the expression above is approximately on the order of
\[
1-\frac{(k'-k)+(t'-t)}{2\left(\frac{2}{3}\tau^2+\frac{1}{3}\tau\right)} \approx 1- \frac{(k'-k)+(t'-t)}{\frac{4}{3}\tau^2}.
\]
Let $\eta \triangleq \frac{4}{3}\tau^2$.
Hence, by summarizing the derivations above and applying the law of large number, we have that when $N\rightarrow \infty$ and $\tau\rightarrow \infty$, the covariance between the two terms become
\[
\mbox{cov}\left(\sum_{n=1}^N \theta_\tau g(U_{n, k', t'}), \sum_{n'=1}^N \theta_\tau g(U_{n', k, t})\right) \approx \theta^2N\cdot [1- \frac{1}{\eta} (k'-k) - \frac{1}{\eta} (t'-t)].
\]
This shows that the two-dimensional random walk decouples in the change-point time $k'$ and the time index $t'$ and the variance of the increments in these two directions are the same and are both equal to $\theta^2 N/\eta$. Hence, the random walk along these two coordiates are asymptotically independent and it becomes similar to the case studied in  \cite{siegmund2011detecting}. Compare this with (the equation following equation (A.4) in \cite{siegmund2011detecting}), note that the only difference is that here the variance of the increment is proportional to $3/(4\tau^2)$ instead of $\tau$, so we may follow a similar chains of calculation as in
the proof in Chapter $7$ of \cite{yakir2013extremes},  \cite{siegmund2011detecting} \cite{xie2013sequential}, the final result corresponds to modifying the upper and lower limit by changing the window length expression to be $\sqrt{4/3}$ and $\sqrt{4w/3}$. 

%

\section{An informal derivation of Theorem \ref{thm:EDD}: EDD}

Recall that $U_{n,k,t}$ is defined in (\ref{U_def}), let $z_{n, i} = (y_{n, i} - \mu_n)/\sigma_n$. Then for $n \in \mathcal{A}$,  $z_{n, i}$ are i.i.d.  normal random variables with mean $c_n i/\sigma_n$ and unit variance, and for $n \in \mathcal{A}^c$, $z_{n, i}$ are i.i.d. standard normal random variables. Since we may write
\begin{equation}
U_{n,k,t} = \frac{\sum_{i=k+1}^t (i-k) z_{n, i}}{\sqrt{\sum_{i=k+1}^t (i-k)^2}}.
\end{equation}
For any time $t$ and $n\in \mathcal{A}$, we have
\begin{equation}
 \mathbb{E}_0^{\mathcal{A}}\{U_{n,0,t}^2\} = 1 + \left(\frac{c_n}{\sigma_n}\right)^2 \sum_{i=1}^{t} i^2
= 1+\left(\frac{c_n}{\sigma_n}\right)^2 \frac{t(t+1)(2t+1)}{6}
= \left(\frac{c_n}{\sigma_n}\right)^2 \frac{t^3}{3} + o(t^3),
\end{equation}
which grows cubically with respect to time. For the unaffected sensors, $n \in \mathcal{A}^c$, $\mathbb{E}_0^{\mathcal{A}}\{U_{n, 0, t}^2\} = 1$. Hence, the value of the detection statistic will be dominated by those affected sensors.

On the other hand, note that when $x$ is large,
\[g(x) = \log (1-p_0 + p_0e^{x^2/2}) = \log p_0  + \frac{x^2}{2} + \log\left(\frac{1-p_0}{p_0} e^{-x^2/2}\right) \approx \frac{x^2}{2} +  \log p_0. \]
Then the expectation of the statistic in (\ref{mainstatistic}) can be computed if $w$ is sufficiently large (at least larger than the expected detection delay), as follows:
\[
\mathbb{E}_0^{\mathcal{A}}\left\{\max_{k < t} \sum_{n=1}^N g \left( U_{n,k,t} \right)\right\} \approx \left(|\mathcal{A}| \log p_0 +\frac{1}{2} \sum_{n\in \mathcal{A}} \mathbb{E}_0^{\mathcal{A}}\left\{U_{n,k,t}^2\right\} + \frac{(N-|\mathcal{A}|)}{2}\right),
\]
%
At the stopping time, if we ignore of the overshoot of the threshold over $b$,  the value statistic is $b$.
Use Wald's identify \cite{Siegmund1985} and if we ignore the overshoot of the statistic over the threshold $b$, we may obtain a first order approximation as $b \rightarrow \infty$, by solving
\begin{equation}
|\mathcal{A}|\log p_0 + \frac{N-|\mathcal{A}|}{2}+\frac{ \mathbb{E}_0^{\mathcal{A}}\{T^3\}}{6} \left[\sum_{n \in \mathcal{A}}  \left(\frac{c_n}{\sigma_n}\right)^2 \right] = b.
\end{equation}
From Jensen's inequality, we know that
$\mathbb{E}_0^{\mathcal{A}}\{T_2^3\}  \geq (\mathbb{E}_0^{\mathcal{A}}\{T_2\})^3$.
Therefore, a first-order approximation for the expected detection delay is given by
\begin{equation}
\mathbb{E}_0^{\mathcal{A}}\{T_2\} \leq \left( \frac{b -N \log p_0 - (N - |\mathcal{A}|)\mathbb{E}\{g(U)\}}{\Delta^2/6} \right)^{1/3} + o(1).
\label{firstorderEDD}
\end{equation}

\section{Proof for Optimality}

\begin{proof}[Proof of Theorem \ref{lowerbound_multi}]
The proof starts by a change of measure from $\mathbb{P}_{\infty}$ to $\mathbb{P}_{k}^{\mathcal{A}}$. For any stopping time $T\in C(\gamma)$, we have that for any $K_{\gamma}>0$, $C>0$ and $\varepsilon \in (0,1)$,
\begin{equation}
\begin{split}
&~\mathbb{P}_{\infty} \left\{ k<T<k+(1-\varepsilon)K_{\gamma} \middle| T>k\right\} \\
=&~\mathbb{E}_{k}^{\mathcal{A}}\left\{ \mathbb{I}_{\{k<T<k+(1-\varepsilon)K_{\gamma} \}} \exp(-\lambda_{{\mathcal{A}},k,T}) \middle| T>k\right\} \\
\geq&~ \mathbb{E}_{k}^{\mathcal{A}} \left\{ \mathbb{I}_{\{k<T<k+(1-\varepsilon)K_{\gamma},\lambda_{{\mathcal{A}},k,T}<C \}}\exp(-\lambda_{{\mathcal{A}},k,T})\middle|T>k\right\} \\
\geq&~ e^{-C}\mathbb{P}_{k}^{\mathcal{A}}\left\{ k<T<k+(1-\varepsilon)K_{\gamma}, \max_{k<j<k+(1-\varepsilon)K_{\gamma}} \lambda_{{\mathcal{A}},k,j}<C\middle|T>k \right\}\\
\geq&~ e^{-C}\left[ \mathbb{P}_{k}^{\mathcal{A}} \left\{ T<k+(1-\varepsilon)K_{\gamma} \middle|T>k \right\} - \right. \\
&~\left. \mathbb{P}_{k}^{\mathcal{A}}\left\{\max_{1\leq j <(1-\varepsilon)K_{\gamma}} \lambda_{{\mathcal{A}},k,k+j}\geq C\middle|T>k \right\}  \right],
\label{chainrule}
\end{split}
\end{equation}
where $\mathbb{I}_{\{A\}}$ is the indicator function of any event $A$, the first equality is Wald's likelihood ratio identity and the last inequality uses the fact that for any event $A$ and $B$ and probability measure $\mathbb{P}$, $\mathbb{P}(A \bigcap B) \geq \mathbb{P}(A)-\mathbb{P}(B^c)$.

From (\ref{chainrule}) we have for any $\varepsilon \in (0,1)$
\begin{equation}
\mathbb{P}_{k}^{\mathcal{A}} \left\{ T<k+(1-\varepsilon)K_{\gamma} |T>k \right\} \leq p_{\gamma,\varepsilon}^{(k)}(T) + \beta_{\gamma,\varepsilon}^{(k)}(T),
\end{equation}
where
\begin{align*}
p_{\gamma,\varepsilon}^{(k)}(T) &= e^C \mathbb{P}_{\infty} \left\{ T<k+(1-\varepsilon)K_{\gamma} \middle|T>k \right\},\\
\beta_{\gamma,\varepsilon}^{(k)}(T) &= \mathbb{P}_{k}^{\mathcal{A}}\left\{\max_{1\leq j <(1-\varepsilon)K_{\gamma}} \lambda_{{\mathcal{A}},k,k+j}\geq C\middle|T>k \right\}.
\end{align*}
Next, we want to show that both $p_{\gamma,\varepsilon}^{(k)}(T)$ and $\beta_{\gamma,\varepsilon}^{(k)}(T)$ converge to zero for any $T\in C(\gamma)$ and any $k\geq 0$ as $\gamma$ goes to infinity.

First, choosing $C=(1+\varepsilon)I[(1-\varepsilon) K_{\gamma}]^q$, then we have
\begin{equation}
\begin{split}
\beta_{\gamma,\varepsilon}^{(k)}(T)
=& ~\mathbb{P}_{k} \left\{ \left[ (1-\varepsilon)K_{\gamma} \right]^{-q} \max_{1\leq j <(1-\varepsilon)K_{\gamma}} \lambda_{{\mathcal{A}},k,k+j}\geq (1+\varepsilon)I \middle|T>k \right\} \\
\leq & ~\mbox{esssup}~ \mathbb{P}_{k}^{\mathcal{A}} \left\{ \left[ (1-\varepsilon)K_{\gamma} \right]^{-q} \max_{1\leq j <(1-\varepsilon)K_{\gamma}} \lambda_{{\mathcal{A}},k,k+j}\geq (1+\varepsilon)I \middle|\mathcal{F}_{k} \right\}.
\end{split}
\end{equation}
By the assumption (\ref{secondassumption_multi}), we have
\begin{equation}
\sup_{0\leq k < \infty} \beta_{\gamma,\varepsilon}^{(k)} \xrightarrow[\gamma \rightarrow \infty]{} 0.
\end{equation}

Second, by Lemma 6.3.1 in \cite{tartakovsky2014sequential}, we know that for any $T\in C(\gamma)$ there exists a $k \geq 0$, possibly depending on $\gamma$, such that
$$
\mathbb{P}_{\infty} \left\{ T<k+(1-\varepsilon)K_{\gamma} \middle| T>k \right\} \leq (1-\varepsilon)K_{\gamma} / \gamma.
$$
Choosing $K_{\gamma} = (I^{-1}\log \gamma)^{1/q}$, then we have
$$
C = (1+\varepsilon)I(1-\varepsilon)^q I^{-1}\log \gamma = (1-\varepsilon^2)(1-\varepsilon)^{q-1} \log\gamma,
$$
and therefore,
\begin{equation}
\begin{split}
p_{\gamma,\varepsilon}^{(k)}(T) \leq&~ \gamma^{(1-\varepsilon^2)(1-\varepsilon)^{q-1} } (1-\varepsilon)K_{\gamma}/\gamma \\
=&~ (1-\varepsilon)(I^{-1}\log \gamma)^{1/q} \gamma^{(1-\varepsilon^2)(1-\varepsilon)^{q-1} -1} \xrightarrow[\gamma \rightarrow \infty]{} 0,
\end{split}
\end{equation}
where the last convergence holds since for any $q \geq 1$ and $\varepsilon \in (0,1)$ we have $(1-\varepsilon^2)(1-\varepsilon)^{q-1} < 1$.
Therefore, for every $\varepsilon \in (0,1)$ and for any $T\in C(\gamma)$ we have that for some $k \geq 0$,
$$
\mathbb{P}_{k}^{\mathcal{A}} \left\{ T<k+(1-\varepsilon)K_{\gamma} \middle| T>k \right\} \xrightarrow[\gamma \rightarrow \infty]{} 0,
$$
which proves (\ref{firstresult_multi}).

Next, to prove (\ref{ESM_multi}), since
$$
\mbox{ESM}_m^{\mathcal{A}}(T) \geq \mbox{SM}_m^{\mathcal{A}}(T) \geq \sup_{0\leq k < \infty} \mathbb{E}_{k}^{\mathcal{A}} \left\{ [(T-k)^+]^m \middle| T>k\right\},
$$
it is suffice to show that for any $T\in C(\gamma)$,
\begin{equation}
 \sup_{0\leq k < \infty} \mathbb{E}_{k}^{\mathcal{A}} \left\{ [(T-k)^+]^m \middle| T>k\right\} \geq [I^{-1}\log \gamma]^{m/q}(1+o(1)) ~\mbox{as}~ \gamma \xrightarrow[]{} 0,
\label{tempproofpart_single}
\end{equation}
where the residual term $o(1)$ does not depend on $T$.
Using the result (\ref{firstresult_multi}) just proved, we can have that for any $\varepsilon \in (0,1)$, there exists some $k \geq 0$ such that
$$
\inf_{T\in C(\gamma)} \mathbb{P}_{k}^{\mathcal{A}}\left\{ T-k\geq (1-\varepsilon)(I^{-1}\log \gamma)^{\frac{1}{q}}\middle|T>k\right\} \xrightarrow[\gamma \rightarrow \infty]{} 1.
$$
Therefore, by also Chebyshev inequality, for any $\varepsilon \in (0,1)$ and $T\in C(\gamma)$, there exist some $k \geq 0$ such that
\begin{equation}
\begin{split}
&\mathbb{E}_{k}^{\mathcal{A}} \left\{ [(T-k)^+]^m \middle| T>k\right\} \\
\geq & \left[(1-\varepsilon)(I^{-1}\log \gamma)^{\frac{1}{q}}\right]^m \mathbb{P}_{k}^{\mathcal{A}}\left\{ T-k\geq (1-\varepsilon)(I^{-1}\log \gamma)^{\frac{1}{q}}\middle| T>k\right\}\\
\geq & \left[(1-\varepsilon)^m (I^{-1}\log \gamma)^{m/q}\right] (1+o(1)), ~\mbox{as}~ \gamma \xrightarrow[]{} \infty,
\label{finalproofpart_single}
\end{split}
\end{equation}
where the residual term does not depend on $T$. Since we can arbitrarily choose $\varepsilon \in (0,1)$ such that the (\ref{finalproofpart_single}) holds, so we have (\ref{tempproofpart_single}), which completes the proof.

\end{proof}

\begin{proof}[Proof of Lemma \ref{ARL2FA_mixtureCUSUM}]
Rewrite $T_{\rm CS}(b)$ as
\begin{equation}
T_{\rm CS}(b) = \inf\left\{t: \max_{0\leq k<t} \prod_{n=1}^N\left(1-p_0+p_0 \exp(\lambda_{n,k,t})  \right) \geq e^b \right\}
\end{equation}
and define $T_{\rm SR}(b)$ an extended Shiryaev-Roberts (SR) procedure as follows:
\begin{equation}
T_{\rm SR}(b) = \inf\left\{ t: R_{t} \geq e^b\right\},
\end{equation}
where
$$
R_{t} = \sum_{k=1}^{t-1} \prod_{n=1}^N \left(1-p_0+p_0 \exp(\lambda_{n,k,t}) \right), t=1,2,\ldots; R_0=0.
$$
Clearly, $T_{\rm CS}(b)\geq T_{\rm SR}(b)$. Therefore, it is sufficient to show that $T_{\rm SR}(b) \in C(\gamma)$ if $b\geq \log \gamma$.

Noticing the martingale properties of the likelihood ratios, we have
\begin{equation}
\mathbb{E}_{\infty} \left\{\exp(\lambda_{n,k,t})\middle| \mathcal{F}_{t-1}\right\} = 1
\label{martingaleprop}
\end{equation}

for all $n=1,2,\ldots,N$, $t>0$ and $0\leq k <t$.
Moreover, noticing that
\begin{equation}
R_t = \sum_{k=1}^{t-2}\prod_{n=1}^N \left(1-p_0+p_0\exp(\lambda_{n,k,t-1} + \lambda_{n,t-1,t}) \right) + \prod_{n=1}^N \left(1-p_0+p_0\exp(\lambda_{n,t-1,t})\right),
\end{equation}
then combining (\ref{martingaleprop}) we have for all $t>0$,
\begin{equation}
\begin{split}
\mathbb{E}_{\infty} \left\{R_t\middle| \mathcal{F}_{t-1}\right\} =&  \sum_{k=1}^{t-2}\prod_{n=1}^N \left(1-p_0+p_0 \exp(\lambda_{n,k,t-1})  \cdot 1\right) + 1 \\
=& R_{t-1} + 1.
\end{split}
\end{equation}
Therefore, the statistic $\left\{ R_t-t\right\}_{t>0}$ is a $(P_{\infty},\mathcal{F}_{t})$-martingale with zero mean. If $\mathbb{E}_{\infty}\left\{T_{SR}(b)\right\} = \infty$ then the theorem is naturally correct, so we only suppose that $\mathbb{E}_{\infty}\left\{T_{\rm SR}(b)\right\} < \infty$ and thus $\mathbb{E}_{\infty} \left\{R_{T_{\rm SR}(b)}-T_{\rm SR}(b)\right\}$ exists. Next, since $0\leq R_t <e^b$ on the event $\{ T_{\rm SR}(b)>t\}$, we have
$$
\liminf_{t\rightarrow \infty} \int_{\{T_{\rm SR}(b)>t\}}|R_t-t| ~d\mathbb{P}_{\infty} =0.
$$
Now we can apply the optional sampling theorem to have $\mathbb{E}_{\infty}\left\{R_{T_{\rm SR(b)}}\right\} = \mathbb{E}_{\infty}\left\{T_{\rm SR}(b)\right\}$. By the definition of stopping time $T_{\rm SR}(b)$, we have $R_{T_{\rm SR}(b)} > e^b$. Thus, we have $\mathbb{E}_{\infty}\left\{T_{\rm CS}(b)\right\} \geq \mathbb{E}_{\infty}\left\{T_{\rm SR}(b)\right\} > e^b$, which shows that $\mathbb{E}_{\infty}\left\{T_{\rm CS}(b)\right\} > \gamma$ if $b\geq \log \gamma$.

\end{proof}

\begin{proof}[Proof of Theorem \ref{optimality_mixture_CUSUM}]
First, we notice that if $b\geq \log \gamma$
$$
E_{\infty} \left\{\widetilde{T}_{\rm CS}(b)\right\} \geq E_{\infty} \left\{T_{\rm CS}(b)\right\} \geq \gamma.
$$
Therefore, by Theorem \ref{lowerbound_multi}, it is sufficient to show that if $b\geq \log \gamma$ and $b=\mathcal{O}(\log \gamma)$, then
\begin{equation}
\mbox{ESM}_m^{\mathcal{A}}(T_{\rm CS}(b)) \leq \left(\frac{\log \gamma}{I_{\mathcal{A}}}\right)^{m/q}(1+o(1)) ~\mbox{as}~ \gamma \rightarrow \infty.
\end{equation}
Equivalently, it is sufficient to prove that
\begin{equation}
\mbox{ESM}_m^{\mathcal{A}}(T_{\rm CS}(b)) \leq \left(\frac{b}{I_{\mathcal{A}}}\right)^{m/q}(1+o(1)) ~\mbox{as}~ b \rightarrow \infty.
\label{temp0_optimality_mixture_CUSUM}
\end{equation}
To start with, we consider a special case when $p_0=1$ in $T_{\rm CS}$ and denote it by
$$
T_{\rm CS2}(b) = \inf\left\{ t>0: \max_{0\leq k<t} \sum_{n=1}^N \lambda_{n,k,t} \geq b \right\}.
$$
Next, we will prove an asymptotical upper bound for the detection delay of $T_{CS2}(b)$.

Let
\begin{equation}
G_b = \left\lfloor \left(\frac{b}{I_{\mathcal{A}}(1-\varepsilon)}\right)^{1/q} \right\rfloor,
\label{G_b}
\end{equation}
and then $(G_b)^q \leq b/[I_{\mathcal{A}}(1-\varepsilon)]$.
Noticing that under $\mathbb{P}_{k}^{\mathcal{A}}$, we have $\sum_{n=1}^N \lambda_{n,k,t} = \lambda_{\mathcal{A},k,t}$ almost surely since the the log-likelihood ratios are $0$ for the sensors that are not affected. Therefore, by (\ref{assumption_optimality_mixture_CUSUM})  we can have that for any $\varepsilon \in (0,1)$, $t\geq 0$ and some sufficiently large $b$,
\begin{equation}
\begin{split}
&\sup_{0\leq k < t} \mbox{esssup}~ \mathbb{P}_{k}^{\mathcal{A}}\left\{ \sum_{n=1}^N \lambda_{n,k,k+G_b}< (G_b)^q I_{\mathcal{A}}(1-\varepsilon)\middle|\mathcal{F}_{k}\right\} \\
\leq&\sup_{0\leq k < t} \mbox{esssup}~ \mathbb{P}_{k}^{\mathcal{A}}\left\{ \lambda_{\mathcal{A}, k,k+G_b}< (G_b)^qI_{\mathcal{A}}(1-\varepsilon)\middle|\mathcal{F}_{k}\right\} \\
\leq &  \sup_{0\leq k < t} \mbox{esssup}~ \mathbb{P}_{k}^{\mathcal{A}}\left\{ \lambda_{\mathcal{A},k,k+G_b} < b\middle| \mathcal{F}_{k}\right\} \leq \varepsilon.
\label{temp1_optimality_mixture_CUSUM}
\end{split}
\end{equation}
Then, for any $k \geq 0$ and integer $l\geq 1$, we can use (\ref{temp1_optimality_mixture_CUSUM}) $l$ times by conditioning on $\left(X_{n,1},\ldots,X_{n,k+(l_0-1)G_b}\right), n=1,2,\ldots,N$ for $l_0=l,l-1,\ldots,1$ in succession (see \cite{lai1998information}) to have
\begin{equation}
\begin{split}
&~\mbox{esssup}~\mathbb{P}_{k}^{\mathcal{A}} \left\{T_{\rm CS2}(b)-k >lG_b \middle| \mathcal{F}_{k}\right\} \\
\leq &~\mbox{esssup}~\mathbb{P}_{k}^{\mathcal{A}} \left\{ \sum_{n=1}^N \lambda_{n,k+(l_0-1)G_b+1,k+l_0G_b}
,l_0=1,
\ldots,l \middle| \mathcal{F}_{k} \right\} \leq \varepsilon^l.
\label{temp2_optimality_mixture_CUSUM}
\end{split}
\end{equation}
Therefore, for sufficiently large $b$ and any $\varepsilon \in (0,1)$, we have
\begin{equation}
\begin{split}
\mbox{ESM}_m(T_{\rm CS2}(b))
\leq& ~\sum_{l=0}^{\infty} \left\{ [(l+1)G_b]^m - (lG_b)^m \right\} \cdot \\
&\qquad \sup_{0\leq k<\infty} \mbox{esssup}~ \mathbb{P}_{k} \left\{[(T_{\rm CS2}-k)^+]^m > (lG_b)^m\middle|\mathcal{F}_{k} \right\}\\
\leq&~ (G_b)^m \sum_{l=0}^{\infty} [(l+1)^m - l^m] \varepsilon^l \\
=& ~(G_b)^m(1+o(1)) ~ \mbox{as} ~ b \rightarrow \infty,
\label{temp3_optimality_single_CUSUM}
\end{split}
\end{equation}
where the first inequality can be known directly from the geometric interpretation of expectation of discrete nonnegative random variables and the last equality holds since for any given $m\geq 1$, $[(l+1)^m - l^m]^{1/l} \rightarrow 1$ as $l \rightarrow \infty$ so that the radius of convergence is $1$.
Using the fact that $(G_b)^m \leq [b/I(1-\varepsilon)]^{m/q}$ we prove (\ref{temp0_optimality_mixture_CUSUM}) for the case $p_0=1$.


Next, we will deal with the case when $p_0 \in (0,1)$.
Rewrite $T_{CS2}(b)$ as
$$
T_{CS2}(b) = \inf\left\{ t: \max_{0\leq k<t} \left( N\log p_0+ \sum_{n=1}^N \lambda_{n,t}^k\right) > b+N\log p_0 \right\},
$$
then
$$
T_{CS2}(b-N\log p_0) = \inf\left\{ t: \max_{0\leq k<t} \left( N\log p_0+ \sum_{n=1}^N \lambda_{n,t}^k\right) > b \right\}.
$$
Clearly, $\mbox{ESM}_m^{\mathcal{A}}(T_{\rm CS}(b)) \leq \mbox{ESM}_m^{\mathcal{A}}(T_{CS2}(b-N\log p_0))$, and thus
\begin{equation}
\mbox{ESM}_m^{\mathcal{A}}(T_{\rm CS}(b)) \leq \left(\frac{b-N\log p_0}{I_{\mathcal{A}}}\right)^{m/q}(1+o(1)).
\label{temp3_optimality_mixture_CUSUM}
\end{equation}
Therefore, we can claim that (\ref{temp0_optimality_mixture_CUSUM}) holds for any fixed $p_0 \in (0,1]$ since $N$ and $p_0$ are constants. If $b \geq \log \gamma$ and $b=\mathcal{O}(\log \gamma)$, $T_{\rm CS}(b)$ belongs to $C(\gamma)$ and $\mbox{ESM}_m^{\mathcal{A}}(T_{\rm CS})$ achieves its lower bound.


\end{proof}

\begin{proof}[Proof of Corollary \ref{optimality_window_mixture_CUSUM}]

The main steps are almost the same with that in the proof of Theorem \ref{optimality_mixture_CUSUM}. The only different thing is that we need the condition $w_{\gamma} \geq G_b$ (defined in (\ref{G_b})) in order to make (\ref{temp2_optimality_mixture_CUSUM}) be correct for any $k \geq 0$ and any integer $l\geq 1$. And the additional assumption (\ref{window_mixture_assumption}) ensures this.
\end{proof}

\begin{proof}[Proof of Lemma \ref{checkassuption}]
Consider testing problem (\ref{P1}), then for any $k \geq 0$ and $j\geq 1$,
$$
\lambda_{\mathcal{A},k,k+j} = \sum_{n\in\mathcal{A}}\frac{1}{\sigma_n^2}\sum_{i=k+1}^{k+j} \left\{c_n(i-k)(y_{n,i}-\mu_j) - \frac{c_n^2(i-k)^2}{2}\right\}.
$$
We define, for each $n\in \mathcal{A}$ and for all $l=1,\ldots,j$,
$$
X_{n,l}^{(k)} = \frac{1}{\sigma_n^2} \left\{c_n l(y_{n,l+k}-\mu_j) - \frac{c_n^2 l^2}{2}\right\}.
$$
Then we have
$$
\lambda_{\mathcal{A},k,k+j} =  \sum_{l=1}^j \sum_{n\in\mathcal{A}} X_{n,l}^{(k)} = \sum_{l=1}^j  X_{\mathcal{A},l}^{(k)},
$$
where we define $X_{\mathcal{A},l}^{(k)} = \sum_{n\in \mathcal{A}}X_{n,l}^{(k)}$.

Under probability measure $\mathbb{P}_{k}^{\mathcal{A}}$, we easily know that $(X_{\mathcal{A},l}^{(k)})_{l=1}^j$ are independent variables which follow normal distribution $N((l^2/2)\sum_{n\in\mathcal{A}}c_n^2 ,l^2\sum_{n\in\mathcal{A}}c_n^2 )$. Other simple computation tells us that
$$
\mathbb{E}_{k}^{\mathcal{A}}\left\{(X_{\mathcal{A},l}^{(k)})^2\right\} < \infty, ~\forall l=1,\ldots,j,
$$
and under probability measure $\mathbb{P}_{k}^{\mathcal{A}}$,
$$
\sum_{l=1}^{\infty} \mbox{Var}\left(\frac{X_{\mathcal{A},l}^{(k)}}{l^3}\right)<\infty,
$$
where $\mbox{Var}(X)$ denotes the variance of random variable $X$.
Therefore, combining Kronecker¡¯s lemma with the Kolmogorov convergence criteria, we have immediately a strong law of large numbers which tells us that
$$
\frac{1}{j^3} \lambda_{\mathcal{A},k,k+j} \xrightarrow[j\rightarrow \infty]{a.s.} \sum_{n \in \mathcal{A}} \frac{c_n^2}{6\sigma_n^2}.
$$
Finally, we complete the proof by using the fact that all the observations are independent.

\end{proof}

\begin{proof}[Proof of Lemma \ref{ARL2FA_multi_GCUSUM}]
First, define $y_{n,t}^k = \frac{\sum_{i=k+1}^t (y_{n,i}-\mu_j)}{\sigma_j\sum_{i=k+1}^{t} (i-k)^2 }$, then
\begin{equation}
\begin{split}
\mathbb{P}_{\infty} \left\{\widetilde{T}_2(b) > t_0\right\} \geq& ~\mathbb{P}_{\infty} \left\{\max_{0<t\leq t_0}\max_{\max(0,t-m_{\gamma})\leq k < t_0} \sum_{n=1}^N \frac{(y_{n,t_0}^k)^2}{2} < b \right\} \\
\geq&~ \left[\mathbb{P}_{\infty}  \{Y<2b\}\right]^{w_{\gamma}t_0},
\end{split}
\end{equation}
where $Y$ is a random variable with $\chi_N^2$ distribution.
Then, since $\widetilde{T}_2(b)$ is a non-negative discrete random variable, we have
\begin{equation}
\begin{split}
\mathbb{E}_{\infty}\left\{\widetilde{T}_2(b)\right\} =& \sum_{t_0=0}^{\infty} \mathbb{P}_{\infty} \left\{\widetilde{T}_2(b) > t_0\right\} \\
\geq & \sum_{t_0=0}^{\infty}\left[\mathbb{P}_{\infty}  \{Y<2b\}\right]^{w_{\gamma}t_0}
= \frac{1}{1-[\mathbb{P}_{\infty}  \{Y<2b\}]^{w_{\gamma}}}.
\end{split}
\end{equation}
Then if we can choose some $b$ so that
$$
\mathbb{P}_{\infty} \{Y\geq2b\} \leq 1-\left(1-\frac{1}{\gamma}\right)^{1/m_{\gamma}},
$$
we can claim that $\mathbb{E}_{\infty}\left\{\widetilde{T}_2(b)\right\} \geq \gamma$ and thus $\widetilde{T}_2(b) \in C(\gamma)$. To choose appropriate threshold $b$, we need use the tail bound for the $\chi_N^2$ distribution. Since $\chi_1^2$ is sub-exponential with parameter $(2\sqrt{N},4)$, it is well known that $\mathbb{P}_{\infty} \{Y\geq2b\} \leq \exp(-\frac{2b-N}{8})$ if $b\geq N$.
If we set
$$
b \geq \frac{N}{2}-4\log \left[ 1-\left(1-\frac{1}{\gamma}\right)^{1/m_{\gamma}} \right]
$$
then $\widetilde{T}_1(b) \in C(\gamma)$.
\end{proof}

\begin{proof}[Proof of Theorem \ref{optimality_here}]

By Lemma \ref{checkassuption}, we can use Theorem \ref{lowerbound_multi} to obtain a lower bound for the detection delays of arbitrary procedures in $C(\gamma)$. Specifically, for all $m\geq 1$,
\begin{equation}
\liminf_{\gamma \rightarrow \infty} \inf_{T\in C(\gamma)} \mbox{ESM}_m^{\mathcal{A}}(T) \geq \liminf_{\gamma \rightarrow \infty} \inf_{T\in C(\gamma)} \mbox{SM}_m^{\mathcal{A}}(T) \geq \left(\frac{\log \gamma}{I_{\mathcal{A}}}\right)^{m/q}.
\end{equation}

(i) Since $T_1(b)$ is a specified mixture CUSUM procedure for testing problem (\ref{P1}) and the observations are independent, the optimality is an immediate corollary from Theorem \ref{optimality_mixture_CUSUM}.

(ii) Since $\widetilde{T}_1(b)$ is a specified window-limited mixture CUSUM procedure for testing problem (\ref{P1}) and the observations are independent, the optimality is an immediate result from Corollary \ref{optimality_window_mixture_CUSUM}.

(iii) The assumption that $\log w_{\gamma} = o(\log \gamma)$ ensures that $b \geq  \frac{N}{2}-4\log \left[ 1-\left(1-\frac{1}{\gamma}\right)^{1/m_{\gamma}} \right]$ and $b=\mathcal{O}(\log \gamma)$ can be satisfied simultaneously.
Since the observations are independent, then $\mbox{ESM}_1^{\mathcal{A}}(\widetilde{T}_2(b)) = \mbox{SM}_1^{\mathcal{A}}(\widetilde{T}_2(b)) = \mathbb{E}_0^{\mathcal{A}}[\widetilde{T}_2(b)]$.  
The optimality of $\widetilde{T}_2(b)$ is an immediate result from Lemma \ref{ARL2FA_multi_GCUSUM} and the first order approximation of the detection delays in (\ref{firstorderEDD}).

\end{proof}

\end{document}